\documentclass{article} 
\usepackage{iclr2025_conference,times}


\usepackage{hyperref}
\usepackage{url}
\usepackage{amsfonts}       
\usepackage{amsmath}
\usepackage{amsthm}
\usepackage{arydshln}
\usepackage{bigstrut}
\usepackage{bm}
\usepackage{booktabs}
\usepackage{color}
\usepackage{comment}
\usepackage[english]{babel}
\usepackage{enumitem}
\usepackage[T1]{fontenc}
\usepackage{graphicx}
\usepackage{listings}
\usepackage{mathabx}
\usepackage{mathtools}
\usepackage{multirow}
\usepackage{multicol}
\usepackage{rotating}
\usepackage{stmaryrd}
\usepackage{subfigure}
\usepackage{tablefootnote}
\usepackage{longtable}
\usepackage{threeparttable}
\usepackage{wrapfig}

\theoremstyle{plain}
\newtheorem{theorem}{Theorem}
\newtheorem{proposition}{Proposition}
\newtheorem{lemma}{Lemma}

\theoremstyle{definition}
\newtheorem{definition}{Definition}

\theoremstyle{remark}

\title{Realistic Evaluation of Deep Partial-Label Learning Algorithms}


\author{Wei Wang~$^{1,2}$\qquad Dong-Dong Wu~$^{3}$\qquad Jindong Wang~$^{4}$\qquad Gang Niu~$^{2,3}$\\ \bf Min-Ling Zhang~$^{3}$\qquad Masashi Sugiyama~$^{2,1}$ \\ 
  $^1$~The University of Tokyo, Chiba, Japan\qquad
  $^2$~RIKEN, Tokyo, Japan\\
  $^3$~Southeast University, Nanjing, China\qquad
  $^4$~William \& Mary, Williamsburg, VA, USA \\
  \texttt{wangw@g.ecc.u-tokyo.ac.jp}~~~ \texttt{\{dongdongwu1230,gang.niu.ml\}@gmail.com} \\ \texttt{jwang80@wm.edu}~~~ \texttt{zhangml@seu.edu.cn}~~~ \texttt{sugi@k.u-tokyo.ac.jp}
}

%


\iclrfinalcopy 
\begin{document}
\maketitle
\begin{abstract}
Partial-label learning~(PLL) is a weakly supervised learning problem in which each example is associated with multiple candidate labels and only one is the true label. In recent years, many deep PLL algorithms have been developed to improve model performance. However, we find that some early developed algorithms are often underestimated and can outperform many later algorithms with complicated designs. In this paper, we delve into the empirical perspective of PLL and identify several critical but previously overlooked issues. First, model selection for PLL is non-trivial, but has never been systematically studied. Second, the experimental settings are highly inconsistent, making it difficult to evaluate the effectiveness of the algorithms. Third, there is a lack of real-world image datasets that can be compatible with modern network architectures. Based on these findings, we propose \textsc{Plench}, the first Partial-Label learning bENCHmark to systematically compare state-of-the-art deep PLL algorithms. We investigate the model selection problem for PLL for the first time, and propose novel model selection criteria with theoretical guarantees. We also create Partial-Label CIFAR-10~(PLCIFAR10), an image dataset of human-annotated partial labels collected from Amazon Mechanical Turk, to provide a testbed for evaluating the performance of PLL algorithms in more realistic scenarios. Researchers can quickly and conveniently perform a comprehensive and fair evaluation and verify the effectiveness of newly developed algorithms based on \textsc{Plench}. We hope that \textsc{Plench} will facilitate standardized, fair, and practical evaluation of PLL algorithms in the future.\footnote{The code implementation of \textsc{Plench} is available at \url{https://github.com/wwangwitsel/PLENCH}. The PLCIFAR10 dataset is available at \url{https://github.com/wwangwitsel/PLCIFAR10}.}
\end{abstract}
\section{Introduction}\label{intro_sec}
Partial-label learning~(PLL) is a weakly supervised learning problem that has attracted much attention recently~\citep{sugiyama2022machine,wang2022adaptive,tian2023partial}. In PLL, each training example is associated with multiple candidate labels ~\citep{jin2002learning,cour2011learning}. The true label for each example is hidden in the set of candidate labels, but not accessible to the learning algorithm. PLL has been successfully applied to computer vision~\citep{liu2012conditional,zeng2013learning,chen2018learning,gong2018regularization,tang2023disambiguated,wang2024learning}, natural language processing~\citep{garrette2013learning,zhou2018weakly,ren2016afet,ren2016label}, web mining~\citep{luo2010learning}, ecoinformatics~\citep{briggs2012rank,wang2019partial,li2021detecting,lyu2022self}, etc.

Among various strategies to address this problem, deep learning-based PLL algorithms have demonstrated satisfactory generalization performance due to the strong representation learning capabilities of deep neural networks~\citep{lv2020progressive,wang2022pico}. Despite the abundance of algorithms in this area, we find that there are several fundamental and critical issues that have received less attention in the PLL literature, including neglected model selection issues, inconsistent experimental settings, and lack of real-world image datasets, which will be discussed point by point.

\begin{figure*}[t]
  \centering
  \subfigure[Aggregate]{
    \includegraphics[width=0.23\textwidth]{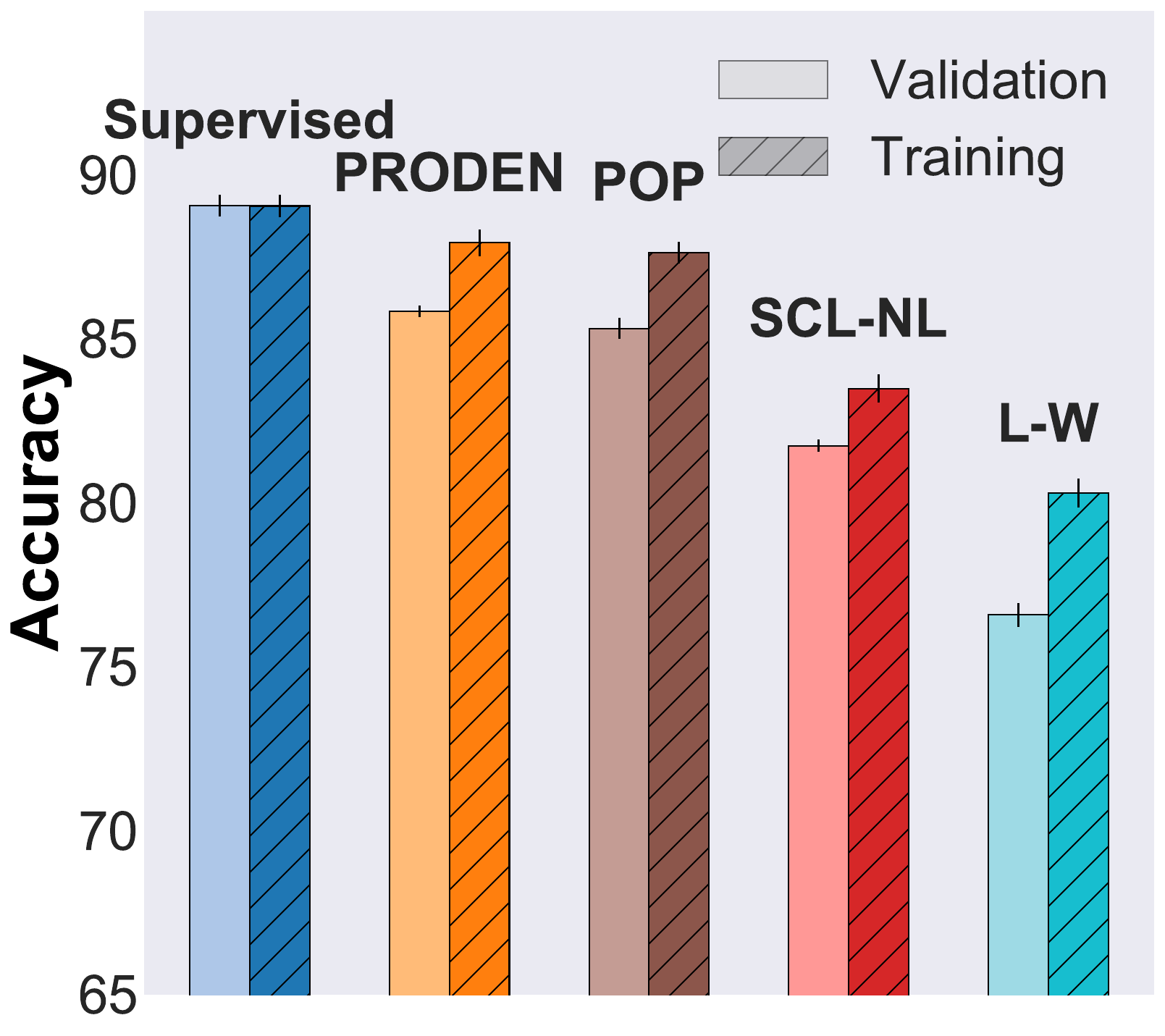}\label{intro_fig_a}
  }
  \subfigure[Vaguest]{
    \includegraphics[width=0.23\textwidth]{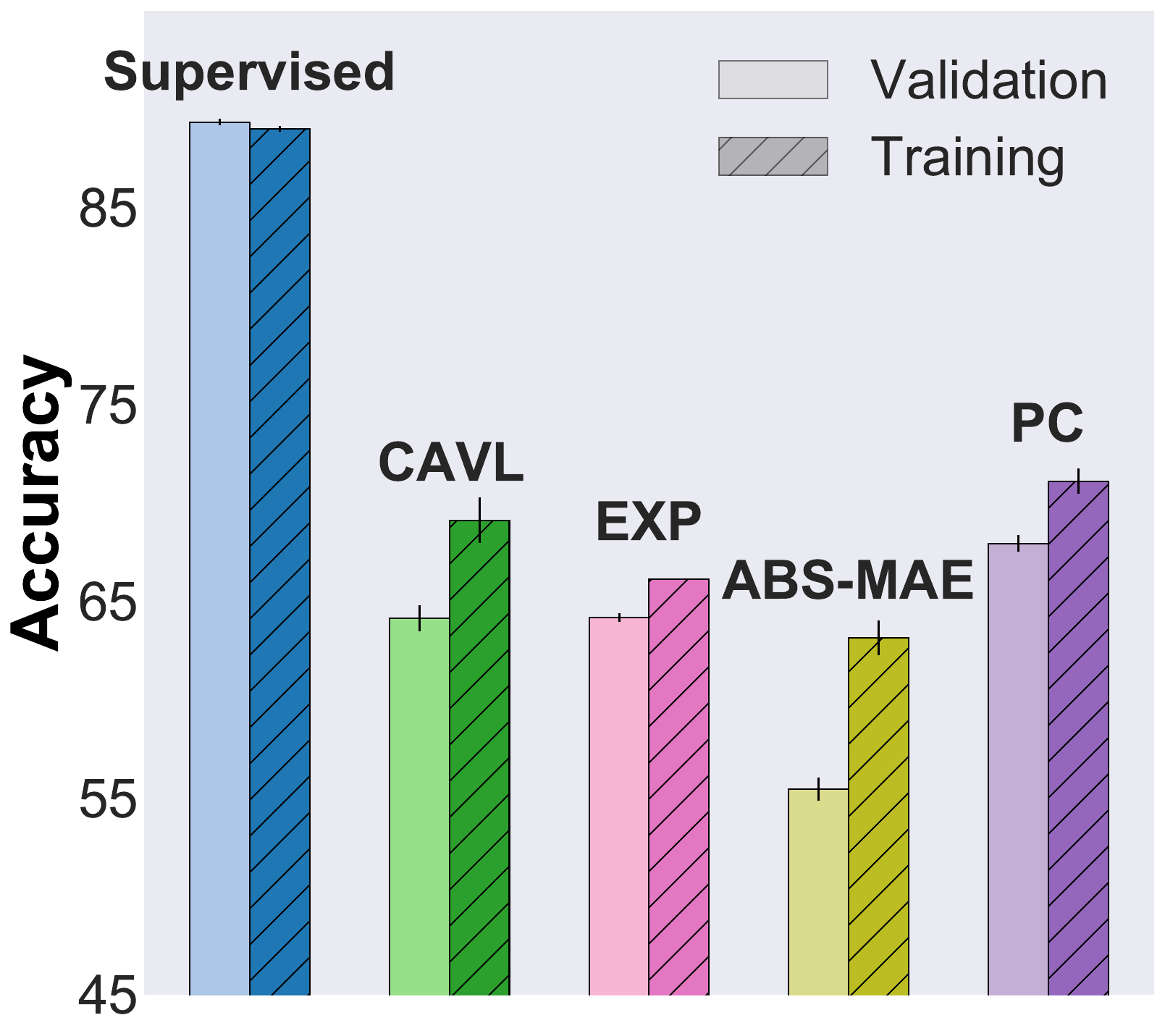}\label{intro_fig_b}
  }
  \subfigure[Soccer Player]{
    \includegraphics[width=0.23\textwidth]{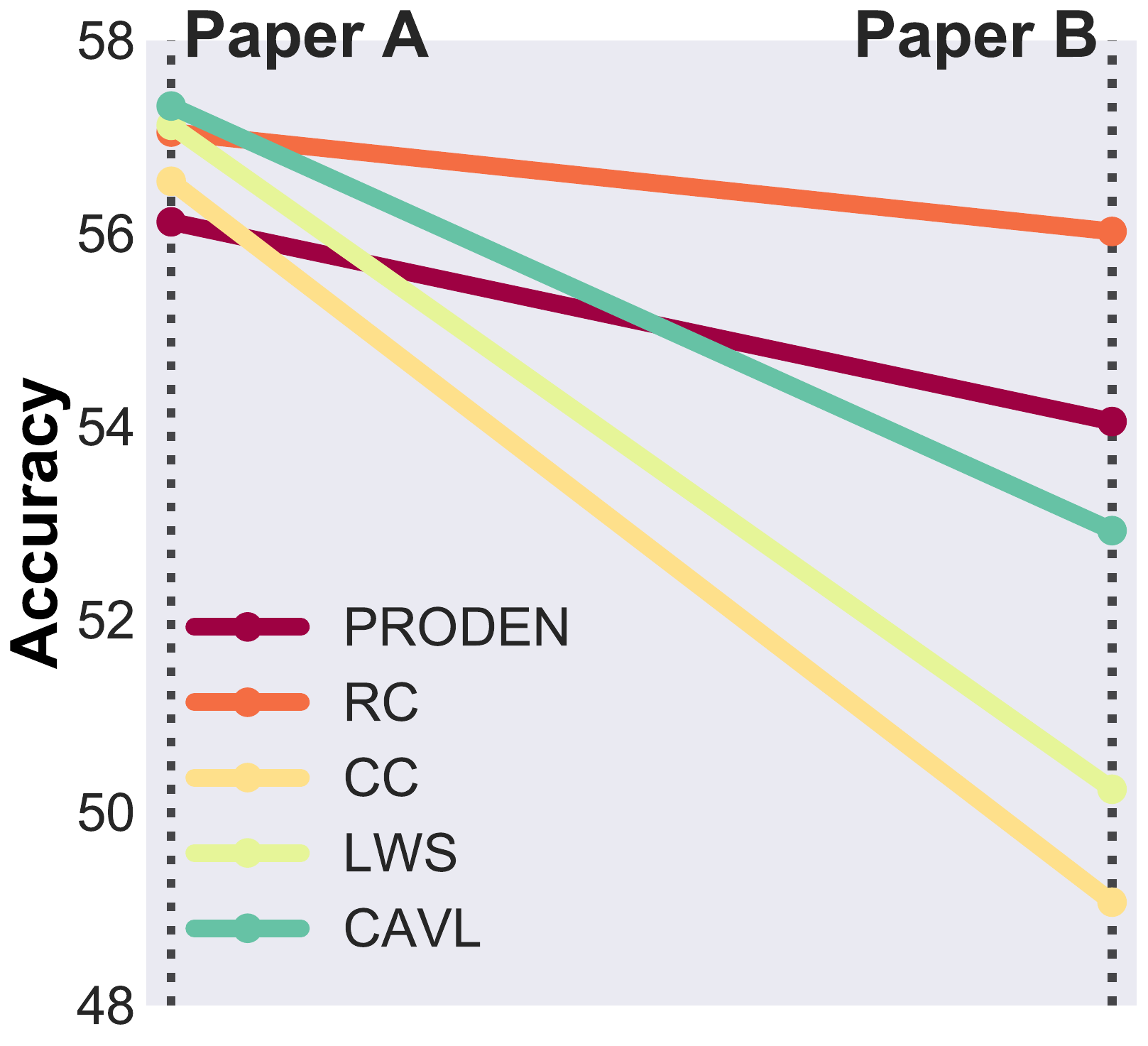}\label{intro_fig_c}
  }
  \subfigure[Yahoo! News]{
    \includegraphics[width=0.23\textwidth]{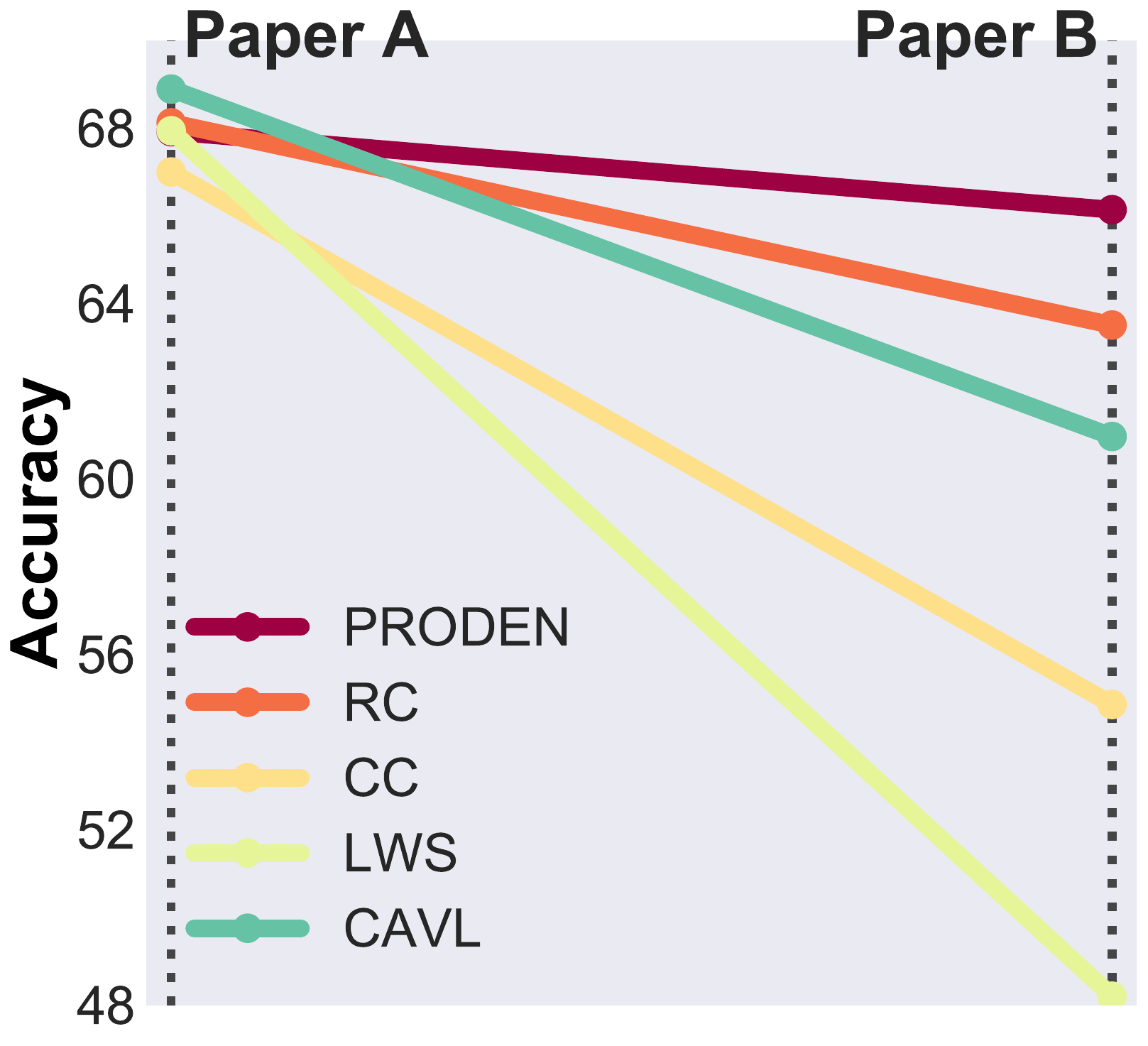}\label{intro_fig_d}
  }
  \\
  \caption{The two left panels show the differences in using an ordinary-label dataset for validation~(lighter colors) and training~(darker colors) for a given algorithm. For validation~(lighter colors), we searched for the best hyperparameter configurations with the validation set for a given algorithm. For training~(darker colors), we considered the validation set as partial-label examples with a single partial label and added them to the training set for training, using default hyperparameters without tuning. For fair comparisons, we trained all models with the same number of iterations. The two right panels show the classification accuracies of some PLL algorithms on Soccer Player and Yahoo! News from papers A~\citep{zhang20222exploiting} and B~\citep{xu2023progressive}, respectively.}\label{intro_inconsistent}
\end{figure*}

\noindent{\textbf{Neglected model selection issues.}}~~~Most PLL algorithms select their hyperparameters by using a clean ordinary-label validation set~\citep{qiao2023decompositional,xu2023alim,xu2023progressive}. However, the original definition of PLL does not allow the existence of an ordinary-label dataset~\citep{jin2002learning,cour2011learning,zhang2017disambiguation}, indicating a mismatch between the problem definitions and experimental settings in the literature. This problem can even lead to \emph{unfair comparisons} if some algorithms follow the classical protocol of PLL to prohibit the use of ordinary-label data, while some algorithms do not. Moreover, \emph{if we have a clean ordinary-label dataset, why not use it for training?} In many cases, the use of clean labels is more valuable for weakly supervised learning than for ordinary supervised learning~\citep{hendrycks2018using,yu2023delving}. A pilot experiment was conducted to compare different approaches to using validation data with ordinary labels. Figures~\ref{intro_fig_a} and~\ref{intro_fig_b} illustrate the results of the experiment on two versions of our collected PLCIFAR10 dataset. We can observe that for many PLL algorithms without much need to tune hyperparameters, it is more beneficial to use validation data with ordinary labels for training. Therefore, it is imperative to standardize the use of validation data and model selection criteria to ensure the integrity and practicality of empirical studies. 

\noindent{\textbf{Inconsistent experimental settings.}}~~~We have found that the experimental settings used in different papers are often quite different, creating a dilemma when comparing the performance of different algorithms. We illustrate this point with an example in Figures~\ref{intro_fig_c} and~\ref{intro_fig_d}. We reported experimental results of several PLL algorithms on Soccer Player and Yahoo! News from papers A~\citep{zhang20222exploiting} and B~\citep{xu2023progressive}, respectively. The data points for the same algorithm are connected by a line, and the number of intersections between different lines indicates the times of inconsistency in the performance ranking. In particular, the algorithm implementations, network architectures, and datasets are identical. However, the relative ranking of the classification performance differed significantly between the different papers. It is hypothesized that the discrepancy is due to subtle variations in experimental settings, such as data partitioning and preprocessing, as well as hyperparameter configurations. Such an obstacle can hinder objective comparisons of different algorithms, making it difficult to determine the effectiveness of a developed technique. 

\noindent{\textbf{Lack of real-world image datasets.}}~~~Existing PLL works have mainly conducted experimental results on real-world tabular datasets or synthetic image datasets to demonstrate the effectiveness of the proposed methodology~\citep{lyu2021gmpll,wang2022pico}. However, on the one hand, tabular datasets may not be compatible with modern network architectures, such as convolutional neural networks~\citep{oquab2015object,he2016deep}. On the other hand, synthetic image datasets are generated by a human-made generation process~(see Section~\ref{background_sec}), which may not be consistent with complex annotation mechanisms in real-world applications~\citep{jiang2020beyond,wei2022learning}. This may lead to potential concerns about whether an algorithm that works well on synthetic image datasets will still work well on real-world complex image datasets~\citep{xu2021instance}.

\noindent{\textbf{Contributions.}}~~~To this end, we propose \textsc{Plench}, the first Partial-Label learning bENCHmark to standardize the experiments of PLL. The main contributions are as follows:
\begin{itemize}[leftmargin=1em, itemsep=1pt, topsep=0pt, parsep=-1pt]
\item We systematically investigate the model selection problem in PLL for the first time and propose new model selection criteria that are both theoretically and empirically validated.
\item We create PLCIFAR10, a new PLL benchmark dataset with human-annotated partial labels. PLCIFAR10 provides an effective testbed for evaluating the performance of PLL algorithms in more realistic scenarios. 
\item We present the first PLL benchmark that includes twenty-seven algorithms and eleven real-world datasets, to systematically compare state-of-the-art deep PLL algorithms.
\end{itemize}
\noindent{\textbf{Takeaways.}}~~~Based on our research, we suggest the following key takeaways:
\begin{itemize}[leftmargin=1em, itemsep=1pt, topsep=0pt, parsep=-1pt]
\item Many concise algorithms can outperform or be comparable to complicated algorithms with strong regularization techniques and higher computational requirements.
\item There is no algorithm that can outperform all other algorithms in all cases, suggesting that the algorithmic design should be tailored for different cases. 
\item Model selection is non-trivial for PLL, and should be specified when proposing a PLL algorithm or comparing different algorithms.
\end{itemize}
\section{Background}\label{background_sec}
\noindent{\textbf{Problem setting.}}~~~Let $\mathcal{X} = \mathbb{R}^d$ denote the $d$-dimensional feature space and $\mathcal{Y}=\left\{1,2,\ldots,q\right\}$ denote the label space with $q$ class labels. Let $\left(\bm{x},S\right)$ denote a partial-label example where $\bm{x} \in \mathcal{X}$ is a feature vector and $S \subseteq \mathcal{Y}$ is a candidate label set associated with $\bm{x}$. The basic assumption of PLL is that the ground-truth label $y$ of $\bm{x}$ is concealed within its candidate label set $S$, i.e., $y\in S$. The task of PLL is to learn a multi-class classifier $\bm{f}:\mathcal{X}\rightarrow [0,1]^q$ from a partial-label training set $\mathcal{D}^{\rm Tr}=\{\left(\bm{x}^{\rm Tr}_i,S^{\rm Tr}_i\right)\}_{i=1}^{n^{\rm Tr}}$ where $\bm{f}(\bm{x})=\left[f_{1}(\bm{x}),f_{2}(\bm{x}),\ldots,f_{q}(\bm{x})\right]$ is the estimated class-posterior probability vector for $\bm{x}$.  

\noindent{\textbf{Data generation process.}}~~~There are mainly two ways to generate synthetic \emph{instance-independent} partial labels, i.e., the uniform sampling strategy~(USS)~\citep{feng2020provably,feng2020learning} and the flipping probability strategy~(FPS)~\citep{zhang20222exploiting}. \citet{feng2020provably} proposed the USS that the candidate label set containing the ground-truth label is sampled from a uniform distribution, i.e., 
\begin{eqnarray}
p(S|\bm{x}, y)=\frac{1}{2^{q-1} - 1}\mathbb{I}\left(y\in S\right),
\end{eqnarray}
where $\mathbb{I}(\pi)$ returns 1 if predicate $\pi$ holds and returns 0 otherwise. \citet{wen2021leveraged} proposed the FPS that each false positive label is independently drawn into the candidate label set with a flipping probability $p(z\in S|y)$. Then, the class-conditional probability distribution of the candidate label set could be formulated as
\begin{eqnarray}
p(S|\bm{x}, y)=\prod_{m\in S, m\neq y}p(m\in S|y)\cdot \prod_{t\notin S}\left(1-p(t\in S|y)\right).
\end{eqnarray}
Although FPS seems more practical, the flipping probability is unknown. Many papers assume that the flipping probability is a constant value for different labels, which is often not true in real-world scenarios~\citep{wei2022learning}. \citet{xu2021instance} proposed \emph{instance-dependent} PLL, which assumes that the generation of partial labels also depends on the feature. However, the datasets are still synthesized by using the model outputs of an auxiliary network. In Section~\ref{plcifar10_section}, we present a more realistic benchmark dataset with human-annotated partial labels.

\section{Realistic Model Selection Criteria for PLL}\label{model_selection_section}
Model selection is a critical component in machine learning problems. However, it has received little attention in the context of PLL. Most PLL work assumes the existence of a clean ordinary-label dataset, which may be neither consistent with the original definition of PLL nor practical, as discussed in Section~\ref{intro_sec}. To promote fair and practical evaluation of PLL algorithms, it is important to systematically examine the model selection procedure for PLL. We follow the original definition of PLL to have only a single partial-label training set~\citep{cour2011learning,zhang2017disambiguation}. Then, following a widely used validation procedure in machine learning~\citep{raschka2018model,gulrajani2021in}, we divide a partial-label validation set $\mathcal{D}^{\rm Val}=\{\left(\bm{x}^{\rm Val}_i,S^{\rm Val}_i\right)\}_{i=1}^{n^{\rm Val}}$ from the training set for model selection. Next, we introduce each model selection criterion in turn. 
\subsection{Covering Rate}
\begin{definition}[Covering Rate~(CR)]The Covering Rate of a multi-class classifier $\bm{f}$ on the partial-label validation set $\mathcal{D}^{\rm Val}$ is defined as:
\begin{equation}
{\rm CR}(\bm{f})=\frac{1}{n^{\rm Val}}\sum_{i=1}^{n^{\rm Val}}\mathbb{I}\left(\mathop{\arg\max}_{j}~f_{j}(\bm{x}^{\rm Val}_i)\in S^{\rm Val}_i\right).
\end{equation}
\end{definition}
CR indicates the fraction of validation data whose predicted label is included in its candidate label set. It is a natural metric in PLL, but its effectiveness may depend on the size of the candidate label sets. When the number of partial labels of each example is equal to 1, i.e., $|S|=1$, PLL is reduced to ordinary supervised learning and CR is reduced to the validation accuracy. However, as $|S|$ increases, more false positive labels are included in the candidate label set. In the most extreme case, when $|S|=q$, PLL is reduced to unsupervised learning and CR does not convey effective information. Before analyzing the gap between CR and the validation accuracy, we introduce the following definition.
\begin{definition}[Ambiguity Degree]\label{sadc}The Ambiguity Degree $\gamma$ is defined as 
\begin{equation}
\gamma = \sup_{(\bm{x},y)\sim p(\bm{x},y),S\sim p(S|\bm{x},y),\bar{y}\neq y} p(\bar{y}\in S),
\end{equation}
where $p(\bm{x},y)$ is the joint distribution over $\bm{x}$ and $y$.
\end{definition}
If $\gamma<1$, the small ambiguity degree is satisfied and the ERM learnability for PLL is guaranteed~\citep{cour2011learning,liu2014learnability}. We also define the expected accuracy as 
\begin{equation}
{\rm ACC}\left(\bm{f}\right)=\mathbb{E}_{p(\bm{x},y)}\mathbb{I}(\mathop{\arg\max}_{l}~f_{l}\left(\bm{x}\right)= y).
\end{equation}
Then, we have the following proposition.
\begin{proposition}\label{cr_bound}
Suppose that there is a constant $\epsilon\in\left(0,1\right)$ such that the expected accuracy of a classifier $\bm{f}$ satisfies ${\rm ACC}\left(\bm{f}\right)\geq\epsilon$. Then, we have $\mathbb{E}\left[{\rm CR}(\bm{f})\right]-{\rm ACC}\left(\bm{f}\right)\leq(1-\epsilon)\gamma$.
\end{proposition}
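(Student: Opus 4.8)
The plan is to pass from the empirical quantity ${\rm CR}(\bm{f})$ to its population counterpart and then decompose the ``covering'' event according to whether the top-$1$ prediction is correct. First I would note that, since the validation examples $(\bm{x}^{\rm Val}_i,S^{\rm Val}_i)$ are drawn from the same generative process as the data, i.e., $(\bm{x},y)\sim p(\bm{x},y)$ followed by $S\sim p(S\mid\bm{x},y)$, linearity of expectation gives
\begin{equation}
\mathbb{E}\left[{\rm CR}(\bm{f})\right]=\mathbb{E}_{p(\bm{x},y)}\,\mathbb{E}_{p(S\mid\bm{x},y)}\,\mathbb{I}\left(\widehat{y}(\bm{x})\in S\right),
\end{equation}
where $\widehat{y}(\bm{x})=\arg\max_{j} f_j(\bm{x})$ denotes the predicted label and $\bm{f}$ is treated as fixed.

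Next I would split the indicator $\mathbb{I}(\widehat{y}(\bm{x})\in S)$ into the two disjoint cases $\widehat{y}(\bm{x})=y$ and $\widehat{y}(\bm{x})\neq y$. Because $y\in S$ always holds by the basic PLL assumption, the first case contributes exactly $\mathbb{E}_{p(\bm{x},y)}\mathbb{I}(\widehat{y}(\bm{x})=y)={\rm ACC}(\bm{f})$. For the second case, conditioning on a pair $(\bm{x},y)$ with $\widehat{y}(\bm{x})\neq y$, the inner expectation reduces to $p(\widehat{y}(\bm{x})\in S\mid\bm{x},y)$, the probability that the \emph{specific} false positive label $\bar y=\widehat{y}(\bm{x})$ enters the candidate set; by Definition~\ref{sadc} this is at most $\gamma$. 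Hence the second case contributes at most $\gamma\cdot\mathbb{E}_{p(\bm{x},y)}\mathbb{I}(\widehat{y}(\bm{x})\neq y)=\gamma\bigl(1-{\rm ACC}(\bm{f})\bigr)$.

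Combining the two cases yields $\mathbb{E}[{\rm CR}(\bm{f})]\leq{\rm ACC}(\bm{f})+\gamma\bigl(1-{\rm ACC}(\bm{f})\bigr)$, hence $\mathbb{E}[{\rm CR}(\bm{f})]-{\rm ACC}(\bm{f})\leq\gamma\bigl(1-{\rm ACC}(\bm{f})\bigr)$. Finally, since the map $t\mapsto\gamma(1-t)$ is nonincreasing and ${\rm ACC}(\bm{f})\geq\epsilon$, the right-hand side is at most $\gamma(1-\epsilon)$, which is the claim.

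I expect the only delicate point to be the treatment of the term $p(\widehat{y}(\bm{x})\in S\mid\bm{x},y)$: one must observe that on the event $\widehat{y}(\bm{x})\neq y$ the predicted label is a single fixed label distinct from $y$, so ``$\widehat{y}(\bm{x})\in S$'' is exactly the event that this particular wrong label is covered --- precisely what the ambiguity degree bounds --- rather than the larger probability that \emph{some} wrong label is covered. Everything else is linearity of expectation together with the elementary monotonicity bound $1-{\rm ACC}(\bm{f})\leq1-\epsilon$.
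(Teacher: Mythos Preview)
Your proposal is correct and follows essentially the same route as the paper: both arguments split the covering event according to whether the top-$1$ prediction equals $y$, use $y\in S$ to make the correct-prediction part coincide with ${\rm ACC}(\bm{f})$, and bound the wrong-prediction part by $\gamma$ times $1-{\rm ACC}(\bm{f})\le 1-\epsilon$ via the ambiguity degree. The paper writes the same decomposition as a difference $\mathbb{I}(\widehat{y}\in S)-\mathbb{I}(\widehat{y}=y)=\mathbb{I}(\widehat{y}\in S\setminus\{y\})$ and then factors through the conditional probability $p(\widehat{y}\in S\setminus\{y\}\mid \widehat{y}\neq y)$, but this is just a repackaging of your case split.
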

The proof is in Appendix~\ref{apd:proofs}. The gap between CR and the accuracy is affected by both the accuracy and the ambiguity degree. If the classifier is more accurate and the partial-label dataset is less ambiguous, the gap between the CR metric and the accuracy will be smaller. Furthermore, we show that the minimizers of both are the same under certain conditions.
\begin{theorem}\label{cr_consistency}
Suppose that the partial labels are generated by following the USS or the FPS with a constant flipping probability. Then, for any two classifiers $\bm{f}_1$ and $\bm{f}_2$ that satisfy $\mathbb{E}\left[{\rm CR}(\bm{f}_1)\right] < \mathbb{E}\left[{\rm CR}(\bm{f}_2)\right]$, we have ${\rm ACC}\left(\bm{f}_1\right) < {\rm ACC}\left(\bm{f}_2\right)$.
\end{theorem}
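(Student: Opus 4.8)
The plan is to reduce the theorem to a single identity: under either generation model in the hypothesis, $\mathbb{E}[{\rm CR}(\bm{f})]$ is an affine function of ${\rm ACC}(\bm{f})$ of the form $\rho + (1-\rho)\,{\rm ACC}(\bm{f})$ for a model-dependent constant $\rho\in[0,1]$. Write $\hat{y}(\bm{x}) = \mathop{\arg\max}_{j} f_{j}(\bm{x})$ for the label predicted by $\bm{f}$. Since the validation examples are i.i.d., $\mathbb{E}[{\rm CR}(\bm{f})]$ equals $\mathbb{E}_{(\bm{x},y),S}[\mathbb{I}(\hat{y}(\bm{x})\in S)]$, with $(\bm{x},y)\sim p(\bm{x},y)$ and $S\sim p(S\mid\bm{x},y)$, while by definition ${\rm ACC}(\bm{f}) = \mathbb{E}_{(\bm{x},y)}[\mathbb{I}(\hat{y}(\bm{x})=y)]$; so it suffices to compute the former in terms of the latter.

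The key step is to observe that, under the stated models, there is a constant $\rho\in[0,1]$ with $p(\bar{y}\in S\mid\bm{x},y)=\rho$ for every $\bm{x}$, every $y$, and every $\bar{y}\neq y$. For the FPS with a constant flipping probability $p$, this is immediate from the product form of $p(S\mid\bm{x},y)$: every false positive label is included in $S$ independently with the same probability, so $\rho=p$. For the USS, $p(S\mid\bm{x},y)$ does not depend on $\bm{x}$ and is uniform over the admissible subsets of $\mathcal{Y}$ containing $y$; since relabeling the classes different from $y$ permutes this family of subsets, the probability that a fixed $\bar{y}\neq y$ lies in $S$ is the same for all such $\bar{y}$ (and, by the same symmetry, the same for all $y$), so it is a constant $\rho$ depending only on $q$.

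Given this, I would condition on $(\bm{x},y)$ and split on whether the prediction is correct. On $\{\hat{y}(\bm{x})=y\}$ we have $\hat{y}(\bm{x})=y\in S$ almost surely, so $\mathbb{E}[\mathbb{I}(\hat{y}(\bm{x})\in S)\mid\bm{x},y]=1$; otherwise $\hat{y}(\bm{x})$ is some label $\neq y$ and this conditional expectation equals $\rho$. Averaging yields
\begin{equation}
\mathbb{E}[{\rm CR}(\bm{f})] = {\rm ACC}(\bm{f}) + \big(1-{\rm ACC}(\bm{f})\big)\rho = \rho + (1-\rho)\,{\rm ACC}(\bm{f}).
\end{equation}
Applying this to $\bm{f}_1$ and $\bm{f}_2$: the hypothesis $\mathbb{E}[{\rm CR}(\bm{f}_1)]<\mathbb{E}[{\rm CR}(\bm{f}_2)]$ forces $\rho\neq 1$ (otherwise both sides would be $1$), hence $1-\rho>0$, and the identity gives ${\rm ACC}(\bm{f}_1)<{\rm ACC}(\bm{f}_2)$.

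I expect the only delicate point to be the clean verification that $\rho$ is genuinely a constant — trivial for the FPS, but for the USS requiring the counting/symmetry argument above, including its independence from $y$. A minor bookkeeping issue is the treatment of ties in $\mathop{\arg\max}$; these are handled by fixing whatever tie-breaking convention makes ${\rm CR}$ and ${\rm ACC}$ well defined, so that $\hat{y}(\bm{x})$ denotes a single label, and they do not affect the argument.
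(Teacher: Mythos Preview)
Your proposal is correct and follows essentially the same approach as the paper: both establish the affine identity $\mathbb{E}[{\rm CR}(\bm{f})]=c+(1-c)\,{\rm ACC}(\bm{f})$ for a constant $c=\rho$ determined by the generation model, and then read off the strict monotonicity. The only cosmetic difference is that the paper reaches the identity by reusing the chain of equalities from the proof of Proposition~\ref{cr_bound} and then simply asserts $c<1$, whereas you argue directly by conditioning on $(\bm{x},y)$ and extract $\rho<1$ from the strict hypothesis itself.
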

The proof is in Appendix~\ref{apd:proofs}. Theorem~\ref{cr_consistency} shows that when partial labels are generated by using the USS or the FPS with a constant flipping probability, the classifier that minimizes the expectation of CR will also minimize the expected accuracy. Therefore, the CR metric will serve as a \emph{consistent} model selection criterion for PLL under certain data distribution assumptions. However, this conclusion may not hold when partial labels are not generated by either strategy.

\subsection{Approximated Accuracy}
Next, we introduce the definition of the Approximated Accuracy metric.
\begin{definition}[Approximated Accuracy~(AA)]The Approximated Accuracy of a multi-class classifier $\bm{f}$ on the partial-label validation set $\mathcal{D}^{\rm Val}$ is defined as:
\begin{equation}
\mathrm{AA}(\bm{f})=\frac{1}{n^{\rm Val}}\sum_{i=1}^{n^{\rm Val}}\sum_{j\in S^{\rm Val}_i}\frac{f_{j}(\bm{x}^{\rm Val}_i)}{\sum_{k\in S^{\rm Val}_i}f_{k}(\bm{x}^{\rm Val}_i)}\mathbb{I}\left(\mathop{\arg\max}_{l}~f_{l}(\bm{x}^{\rm Val}_i)= j\right).
\end{equation}
\end{definition}
Then, we have the following theorem.
\begin{theorem}\label{ure_thm}
Suppose there is a function $C: \mathcal{X}\bigtimes2^{\mathcal{Y}}\mapsto\mathbb{R}$ such that the condition $p(S|\bm{x}, y)=C(\bm{x}, S)\mathbb{I}\left(y\in S\right)$ holds for partial-label data. Suppose further that the multi-class classifier $\bm{f}(\bm{x})$ is consistent with $p(y|\bm{x})$. Then, under mild conditions $\mathrm{AA}(\bm{f})$ is statistically consistent with the expected accuracy, i.e., $\mathbb{E}\left[{\rm AA}(\bm{f})\right]={\rm ACC}\left(\bm{f}\right)$.
\end{theorem}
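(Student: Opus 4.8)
The plan is to compute $\mathbb{E}\left[\mathrm{AA}(\bm{f})\right]$ directly and show it collapses to $\mathbb{E}_{\bm{x}}\left[\max_l f_l(\bm{x})\right]$, which is exactly $\mathrm{ACC}(\bm{f})$ once $\bm{f}$ is the true class-posterior. First I would simplify the summand of $\mathrm{AA}$. The indicator $\mathbb{I}(\arg\max_l f_l(\bm{x}_i)=j)$ is nonzero for at most one $j$ (here I invoke the ``mild condition'' that argmax ties occur with probability zero, or are broken by a fixed rule), so the inner sum over $j\in S_i$ collapses: writing $\hat{y}(\bm{x})=\arg\max_l f_l(\bm{x})$, each example contributes $f_{\hat{y}(\bm{x}_i)}(\bm{x}_i)\big/\sum_{k\in S_i}f_k(\bm{x}_i)$ when $\hat{y}(\bm{x}_i)\in S_i$ and $0$ otherwise. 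Since the validation examples are i.i.d., taking expectation over the generative process $(\bm{x},y)\sim p(\bm{x},y)$, $S\sim p(S|\bm{x},y)$ and conditioning on $\bm{x}$ gives
\begin{equation}
\mathbb{E}\left[\mathrm{AA}(\bm{f})\right] = \mathbb{E}_{\bm{x}}\left[\sum_{y\in\mathcal{Y}} p(y|\bm{x})\sum_{S\subseteq\mathcal{Y}} p(S|\bm{x},y)\,\frac{f_{\hat{y}(\bm{x})}(\bm{x})}{\sum_{k\in S}f_k(\bm{x})}\,\mathbb{I}(\hat{y}(\bm{x})\in S)\right].
\end{equation}

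Next I would substitute the hypothesized decomposition $p(S|\bm{x},y)=C(\bm{x},S)\mathbb{I}(y\in S)$ and exchange the order of the two finite sums (licit because $\mathcal{Y}$ is finite and all quantities are bounded). Pulling the $S$-sum outside, the factor $\mathbb{I}(y\in S)$ restricts the inner $y$-sum to $\sum_{y\in S}p(y|\bm{x})$, while $C(\bm{x},S)$, the fraction, and $\mathbb{I}(\hat{y}(\bm{x})\in S)$ do not depend on $y$ and factor out. Now I use that $\bm{f}$ is consistent with $p(y|\bm{x})$, i.e.\ $f_l(\bm{x})=p(l|\bm{x})$ for all $l$; hence $\sum_{y\in S}p(y|\bm{x})=\sum_{k\in S}f_k(\bm{x})$, which cancels exactly against the denominator, leaving
\begin{equation}
\mathbb{E}\left[\mathrm{AA}(\bm{f})\right] = \mathbb{E}_{\bm{x}}\left[f_{\hat{y}(\bm{x})}(\bm{x})\sum_{S\,:\,\hat{y}(\bm{x})\in S} C(\bm{x},S)\right].
\end{equation}

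The final ingredient is a normalization identity: for any label $y$ with $p(y|\bm{x})>0$, summing the probability mass $\sum_S p(S|\bm{x},y)=1$ and using the decomposition yields $\sum_{S\,:\,y\in S}C(\bm{x},S)=1$; applying this with $y=\hat{y}(\bm{x})$ (and noting the prefactor $f_{\hat{y}(\bm{x})}(\bm{x})=p(\hat{y}(\bm{x})|\bm{x})$ vanishes on the null event where $\hat{y}(\bm{x})$ has zero posterior, so the identity is only needed where it holds) gives $\mathbb{E}\left[\mathrm{AA}(\bm{f})\right]=\mathbb{E}_{\bm{x}}\left[f_{\hat{y}(\bm{x})}(\bm{x})\right]=\mathbb{E}_{\bm{x}}\left[\max_l f_l(\bm{x})\right]$. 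I then identify this with $\mathrm{ACC}(\bm{f})$: by definition $\mathrm{ACC}(\bm{f})=\mathbb{E}_{\bm{x}}\left[\sum_y p(y|\bm{x})\mathbb{I}(\hat{y}(\bm{x})=y)\right]=\mathbb{E}_{\bm{x}}\left[p(\hat{y}(\bm{x})|\bm{x})\right]=\mathbb{E}_{\bm{x}}\left[\max_l p(l|\bm{x})\right]$, which equals the previous display since $\bm{f}(\bm{x})=p(\cdot|\bm{x})$. I expect the only real obstacle to be the bookkeeping hidden in the ``mild conditions'': fixing a tie-breaking convention so the collapse of the inner sum is exact, and making the normalization $\sum_{S\ni y}C(\bm{x},S)=1$ available at the label $\hat{y}(\bm{x})$ (handled by the posterior-positivity remark). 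The algebraic core---substitute $C$, swap the sums, cancel the denominator using $\bm{f}=p(\cdot|\bm{x})$, apply the normalization---is routine.
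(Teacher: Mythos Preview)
Your argument is correct and coincides in substance with the paper's. The paper packages the computation as an application of a lemma from \citet{wu2023learning}: under the decomposition $p(S\mid\bm{x},y)=C(\bm{x},S)\mathbb{I}(y\in S)$, the ordinary risk $\mathbb{E}_{p(\bm{x},y)}[\mathcal{L}(\bm{f}(\bm{x}),y)]$ equals the partial-label reweighted risk $\mathbb{E}\bigl[\sum_{j\in S}\tfrac{p(j\mid\bm{x})}{\sum_{k\in S}p(k\mid\bm{x})}\mathcal{L}(\bm{f}(\bm{x}),j)\bigr]$; it then specializes $\mathcal{L}$ to the accuracy indicator and uses $\bm{f}=p(\cdot\mid\bm{x})$ to identify the right-hand side with $\mathbb{E}[\mathrm{AA}(\bm{f})]$. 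Your direct calculation---substitute $C$, swap the finite sums, cancel $\sum_{k\in S}f_k(\bm{x})$ against $\sum_{y\in S}p(y\mid\bm{x})$, apply the normalization $\sum_{S\ni y}C(\bm{x},S)=1$---is exactly the proof of that lemma unwound for this particular loss, so the two routes are the same up to presentation. Your version has the advantage of being self-contained and of making the ``mild conditions'' (a tie-breaking rule for $\hat{y}$, and posterior positivity at $\hat{y}(\bm{x})$ so the normalization identity is available) explicit, whereas the paper leaves these implicit in the cited lemma.
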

The proof can be found in Appendix~\ref{apd:proofs}. The introduction of AA is inspired by data-generation-based strategies for PLL~\citep{wu2023learning}. Theorem~\ref{ure_thm} illustrates that AA can be a consistent metric w.r.t.~the expected classification accuracy on test data under certain assumptions. In particular, the data distribution assumption can hold for a wide range of types of partial labels~\citep{liu2012conditional,wu2023learning}. However, there are two factors that may affect its effectiveness. First, it may not be suitable for certain algorithms~\citep{wen2021leveraged} where the loss function is not \emph{strictly proper}~\citep{gneiting2007strictly,charoenphakdee2021on} and its modeling output is not calibrated to the posterior probabilities. Second, it requires that the modeling output to be accurate, which may not be satisfied in the early stages of training. 
\subsection{Oracle Accuracy}
Finally, we present the definition of the Oracle Accuracy metric.
\begin{definition}[Oracle Accuracy~(OA)]
The Oracle Accuracy of a multi-class classifier $\bm{f}$ on a partial-label validation set $\mathcal{D}^{\rm Val}$ is defined as
\begin{equation}
\mathrm{OA}(\bm{f})=\frac{1}{n^{\rm Val}}\sum_{i=1}^{n^{\rm Val}}\mathbb{I}\left(\mathop{\arg\max}_{l}~f_{l}(\bm{x}^{\rm Val}_i)= y^{\rm Val}_i\right),
\end{equation}
where $y^{\rm Val}_i$ is the underlying true label of $\bm{x}^{\rm Val}_i$.
\end{definition}
The OA metric is natural if we have access to true labels in supervised learning. However, such a condition is not realistic in real problems of PLL. In fact, the availability of true labels contradicts the original motivation of PLL, which is to reduce labeling costs at the expense of true label ignorance. Inspired by~\citet{gulrajani2021in}, we restrict the use of true labels by allowing only one query (the last checkpoint) for each hyperparameter configuration. This means that we do not allow early stopping when using the OA metric. We try to compensate for the unrealistic access to true labels by restricting the model selection space. We also include the results of OA with early stopping (ES), which can be considered as an upper bound of the model selection performance, for reference. Actually, OA with ES may be the most common model selection criterion for PLL papers.
\section{PLCIFAR10: A Dataset with Human-Annotated Partial Labels}\label{plcifar10_section}
In this section, we present a novel image dataset with human-annotated partial labels to compensate for the lack of real-world image datasets for PLL. We used Amazon Mechanical Turk~(MTurk) as our crowdsourced annotation platform. CIFAR-10~\citep{krizhevsky2009learning} was chosen as the base dataset because it has been widely used in the PLL literature. In addition, the difficulty and workload are relatively moderate, which can facilitate our experimental design and analysis. We posted the annotation tasks of the images from the training set of CIFAR-10 as Human Intelligence Tasks~(HITs), and the crowdsourced workers received salaries by completing the HITs. We created 5000 HITs, each containing 10 random images. Workers were allowed to select \emph{multiple candidate labels}, which could include the true label for an image. We then asked 10 different workers to perform the same HIT at the same time. As a result, each image could be annotated with multiple candidate label sets, and each candidate label set could contain multiple labels. More details on the data collection can be found in Appendix~\ref{apd:datasets}. 

\begin{figure*}[t]
  \centering
  \subfigure[t][Class Distribution]{
    \includegraphics[width=0.23\textwidth]{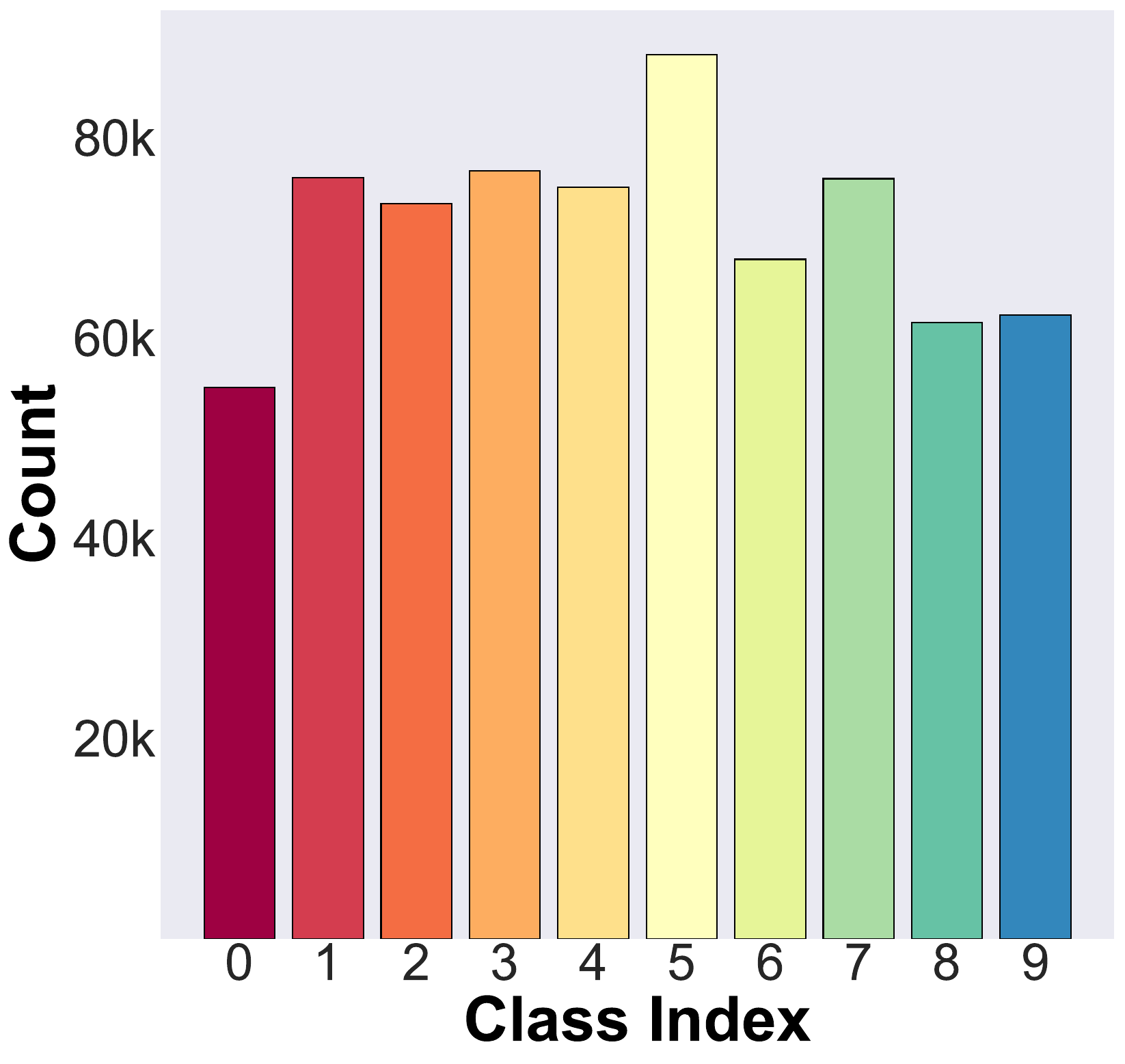}\label{ds_fig_a}
  }
  \subfigure[t][Noise Rate Distribution]{
    \includegraphics[width=0.235\textwidth]{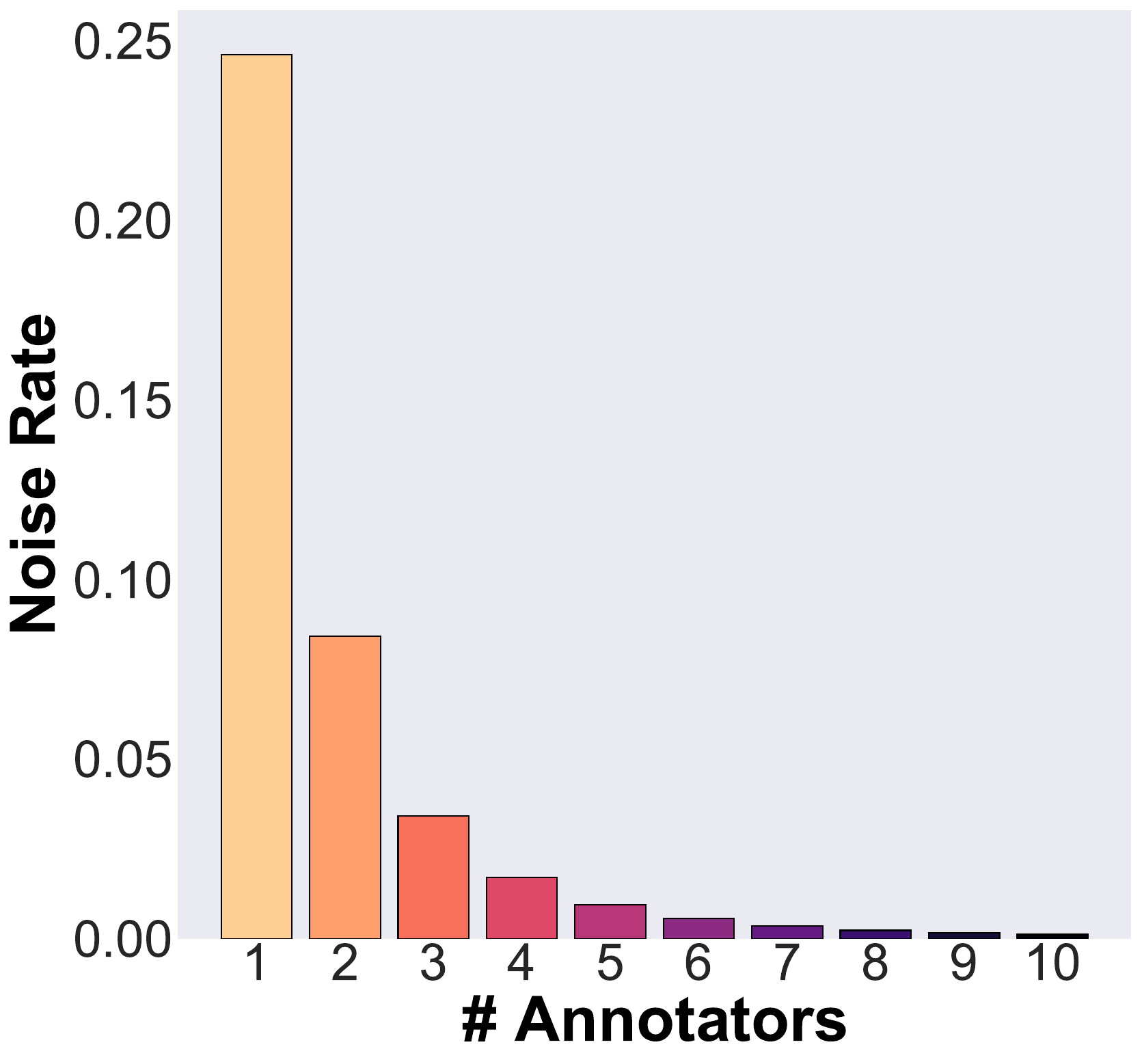}\label{ds_fig_b}
  }
  \subfigure[t][Aggregate]{
    \includegraphics[width=0.23\textwidth]{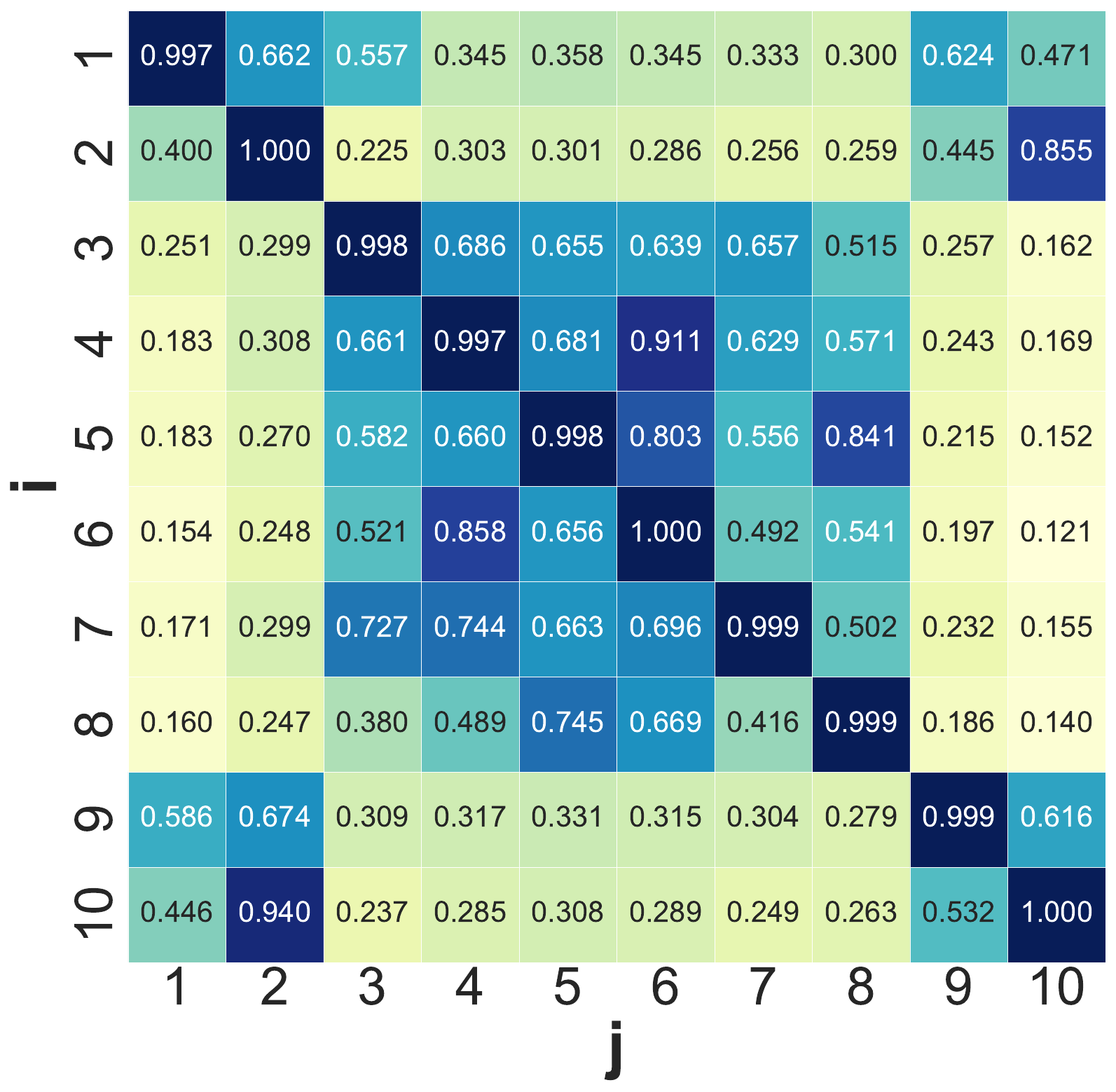}\label{ds_fig_c}
  }
  \subfigure[t][Vaguest]{
    \includegraphics[width=0.23\textwidth]{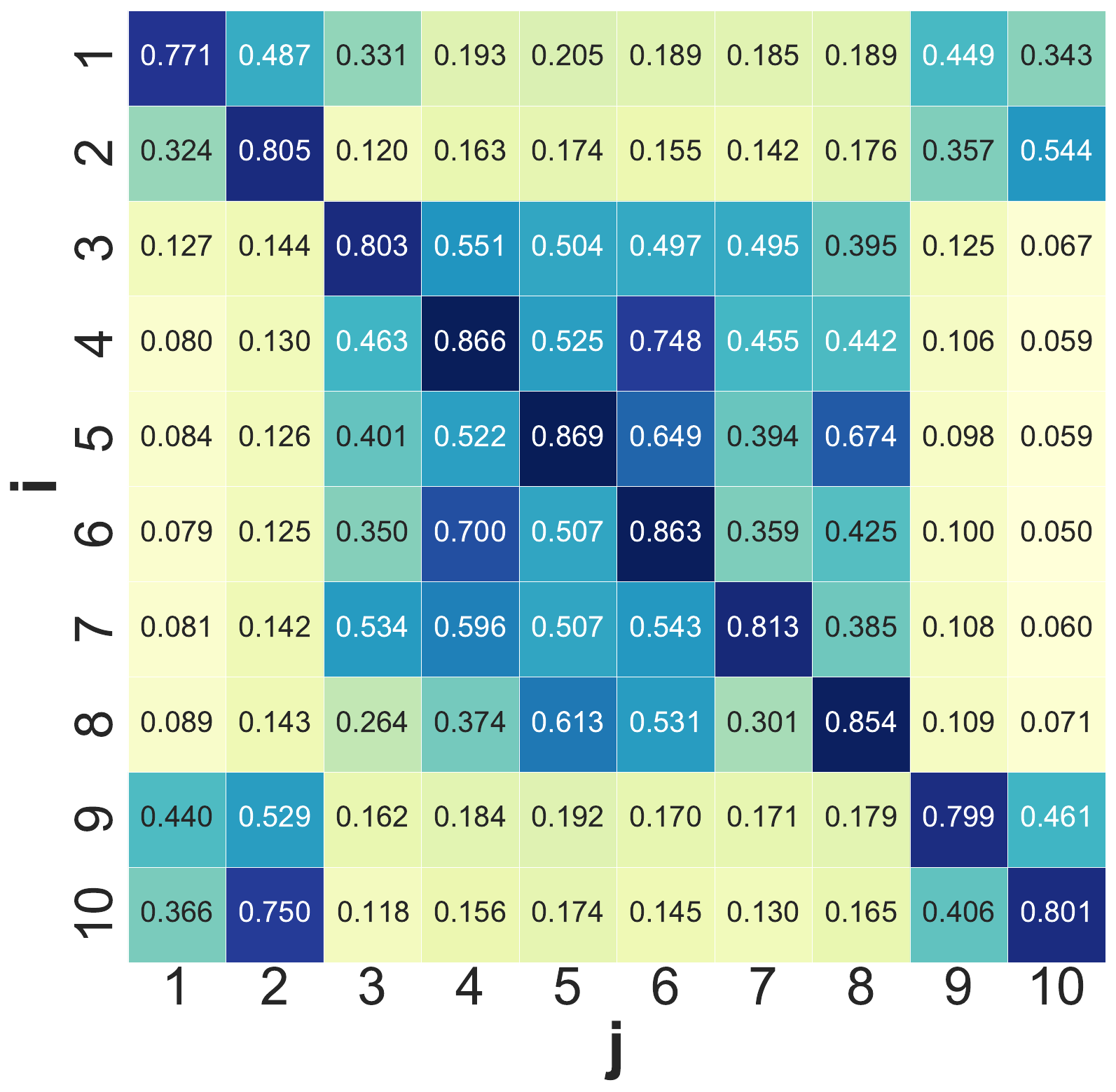}\label{ds_fig_d}
  }
  \caption{(a). The distribution of the collected partial labels of PLCIFAR10. (b) The noise rate with the increasing number of annotators. (c) The flipping probability matrix computed on PLCIFAR10-Aggregate. (d) The flipping probability matrix computed on PLCIFAR10-Vaguest.}\label{plcifar10}
\end{figure*}

In summary, we collected 502,190 candidate label sets with a total of 712,109 partial labels. Figure~\ref{ds_fig_a} shows the distribution of our collected partial labels. The imbalanced distribution of partial labels may not match the commonly used USS or FPS with a constant probability. Furthermore, we find that the noise rate, i.e., the proportion of training examples whose candidate label sets do not contain the true label, is high with few annotators, as shown in Figure~\ref{ds_fig_b}. As the number of annotators increases, the aggregation of partial labels may become less noisy. We consider two versions of PLCIFAR10 for experiments. The first is \textbf{PLCIFAR10-Aggregate}, which assigns the aggregation of all partial labels from all annotators to each example. The second is \textbf{PLCIFAR10-Vaguest}, which assigns to each example the largest candidate label set from the annotators. Since PLCIFAR10-Vaguest has a high noise rate, it actually serves as a dataset for noisy PLL~\citep{xu2023alim,lv2024on}, which is a more practical setting since it is hard to ensure that annotated partial labels always include the true label in real-world applications. Figures~\ref{ds_fig_c} and~\ref{ds_fig_d} show the flipping probability matrices $p(j\in S| y=i)$ computed based on the two versions of PLCIFAR10. We can see that the flipping probabilities are not equal and the diagonals are not all 1, which is more challenging and practical than current synthetic datasets generated with the USS or the FPS.
\section{Setup of \textsc{Plench}}
\subsection{Benchmark algorithms}
We have divided the benchmark algorithms into four main groups. The detailed descriptions and hyperparameter settings can be found in Appendix~\ref{apd:algos}. 

\noindent{\textbf{Vanilla deep PLL algorithms.}}~~~We considered deep PLL algorithms that used simple loss functions or disambiguation strategies without strong regularization techniques as vanilla deep PLL algorithms. Identification-based strategies included PRODEN~\citep{lv2020progressive}, CAVL~\citep{zhang20222exploiting}, and POP~\citep{xu2023progressive}. Note that RC~\citep{feng2020provably} is in the same form as PRODEN, so we only included PRODEN here to avoid repetitive comparisons. The averaging-based strategies included ABS-MAE~\citep{lv2024on} and ABS-GCE~\citep{lv2024on}, which showed favorable performance among the losses in the family. 
The data-generation-based strategies included EXP~\citep{feng2020learning}, MCL-GCE~\citep{feng2020learning}, MCL-MSE~\citep{feng2020learning}, CC~\citep{feng2020provably}, LWS~\citep{wen2021leveraged}, and IDGP~\citep{qiao2023decompositional}. Also, LOG~\citep{feng2020learning} is analogous to CC, so we only included CC here.

\noindent{\textbf{Vanilla deep CLL algorithms.}}~~~Complementary-label learning~(CLL) is a special case of PLL with only one label excluded from each candidate label set of training examples, i.e., $|S|=q-1$~\citep{wang2024climage}. If we consider each label outside the candidate label set as a complementary label, it is possible to apply CLL algorithms to solve PLL problems as well~\citep{feng2020learning}. Therefore, we included several vanilla CLL algorithms with simple loss functions in \textsc{Plench}. The vanilla CLL algorithms included PC~\citep{ishida2017learning}, Forward~\citep{yu2018learning}, NN~\citep{ishida2019complementary}, GA~\citep{ishida2019complementary}, SCL-EXP~\citep{chou2020unbiased}, SCL-NL~\citep{chou2020unbiased}, L-W~\citep{gao2021discriminative}, and OP-W~\citep{liu2023consistent}.

\noindent{\textbf{Holistic deep PLL algorithms.}}~~~We considered PLL algorithms that employ strong representation learning or regularization techniques to be holisitic PLL algorithms. We used five algorithms, including VALEN~\citep{xu2021instance}, PiCO~\citep{wang2022pico}, ABLE~\citep{xia2022ambiguity}, CRDPLL~\citep{wu2022revisiting}, and DIRK~\citep{wu2024distilling}.

\noindent{\textbf{Deep noisy PLL algorithms.}}~~~We also included three noisy PLL algorithms that explicitly considered tailored strategies to handle the cases where the true label might be outside the candidate label set. The deep noisy PLL algorithms included FREDIS~\citep{qiao2023fredis}, ALIM~\citep{xu2023alim}, and PiCO+~\citep{wang2024picoplus}.
\subsection{Benchmark Datasets}
To comprehensively evaluate the model performance of all algorithms, we included eleven real-world PLL benchmark datasets, including nine widely used tabular datasets and two image datasets that we collected. The tabular datasets included Lost~\citep{cour2011learning}, Soccer Player~\citep{zeng2013learning}, and Yahoo! News~\citep{guillaumin2010multiple} for the automatic face naming task, MSRCv2~\citep{liu2012conditional} for the object classification task, Mirflickr~\citep{huiskes2008mir} for the web image classification task, Birdsong~\citep{briggs2012rank} for the bird song classification task, Malagasy~\citep{garrette2013learning}, Italian~\citep{johan2009converting}, and English~\citep{zhou2018weakly} for the POS tagging task. A detailed description of the datasets can be found in Appendix~\ref{apd:datasets}.

\subsection{Experimental Settings}
In this paper, we mainly investigate the hyperparameter tuning problem in the context of model selection. For tabular datasets, we first divided a test set from the entire dataset. Since the datasets were not explicitly divided into training and validation parts, we manually divided them into a partial-label training set $\mathcal{D}^{\rm Tr}$ and a partial-label validation set $\mathcal{D}^{\rm Val}$. Then, we trained a model with $\mathcal{D}^{\rm Tr}$ and evaluated its validation performance on $\mathcal{D}^{\rm Val}$ with the model selection criteria proposed in Section~\ref{model_selection_section} as well as its test performance on the test set $\mathcal{D}^{\rm Te}$ with ordinary labels. We then selected the checkpoint with the best validation performance and returned the corresponding test accuracy as the final result. We randomly selected a set of hyperparameter configurations from a given pool for a given data split and recorded the mean accuracy as well as the standard deviations for the best performance with different dataset splits. We used ResNet~\citep{he2016deep} and DenseNet~\citep{huang2017densely} for image datasets and a multilayer perceptron~(MLP) with a hidden layer width of 500 equipped with the ReLU~\citep{nair2010rectified} activation function for tabular datasets. 
\section{Experiments}
\subsection{Experimental Results on Tabular Datasets}
Figure~\ref{tabular_res} shows the box plots of vanilla deep PLL and CLL algorithms on real tabular datasets, where NN and GA are not included in the figure because their performance was not satisfactory. Moreover, most holistic deep PLL algorithms are based on different data augmentation strategies and cannot be applied here. Detailed experimental results can be found in Appendix~\ref{apd:exp_res}. We can observe that AA does not work well in some cases where the classifier is not accurate enough. The CR metric is sometimes more effective and can serve as an alternative model selection criterion. Therefore, we may need to prepare different model selection criteria to effectively determine the hyperparameters in different cases. The classification performance of PRODEN is very strong in most cases, which clearly confirms its effectiveness. However, it still cannot outperform all other algorithms in all cases. In addition, some early vanilla CLL algorithms, such as Forward, SCL-EXP, and OP-W, can also achieve good performance and deserve to be considered when conducting comparative experiments for PLL. 
\begin{figure*}[ht]
  \centering
  \includegraphics[width=0.95\textwidth]{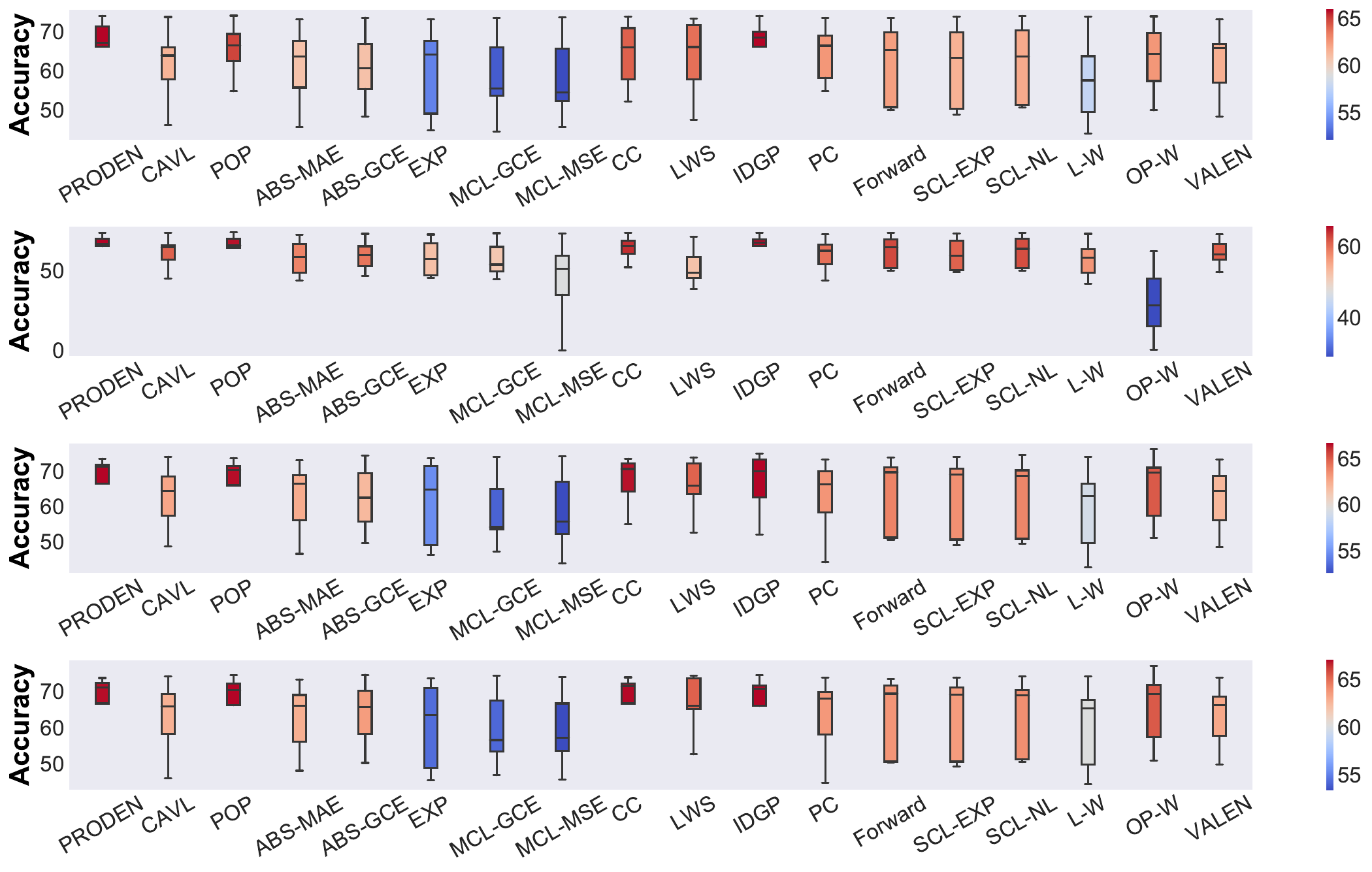}
  \caption{Experimental results of different algorithms on tabular datasets. The top, middle, and bottom figures correspond to box plots of experimental results using CR, AA, OA, and OA with ES for hyperparameter tuning, respectively. The colors of the bars indicate the mean accuracy.}\label{tabular_res}
\end{figure*}
\begin{table}[ht]
\centering
\scriptsize
\vspace{-10pt}
\caption{Classification accuracy (mean$\pm$std) of each algorithm on PLCIFAR10-Aggregate with ResNet, where the best performance w.r.t. each metric is shown in bold.}\label{aggregate_res}
\resizebox{0.99\textwidth}{!}{
\begin{tabular}{llcccc}
\toprule
\multicolumn{1}{c}{\textbf{Algorithm}}&\multicolumn{1}{c}{\textbf{Venue}}                         & \textbf{w/ CR}                                     & \textbf{w/ AA}                                     & \textbf{w/ OA}                                     & \textbf{w/ OA \& ES}                               \\
\midrule
PRODEN&ICML 2020~\citep{lv2020progressive}         & \bf 85.95$\pm$0.08                                     & 85.77$\pm$0.11                                     & 85.91$\pm$0.18                                     & 86.03$\pm$0.13                                     \\
CAVL&ICLR 2022~\citep{zhang20222exploiting}        & 68.09$\pm$0.25                                     & 65.92$\pm$1.15                                     & 68.23$\pm$0.17                                     & 69.12$\pm$0.47                                     \\
POP&ICML 2023~\citep{xu2023progressive}            & 84.94$\pm$0.08                                     & 85.27$\pm$0.34                                     & 85.04$\pm$0.27                                     & 85.53$\pm$0.24                                     \\
ABS-MAE&TPAMI 2024~\citep{lv2024on}                & 55.13$\pm$1.13                                     & 45.51$\pm$4.38                                     & 54.31$\pm$1.22                                     & 55.32$\pm$1.15                                     \\
ABS-GCE&TPAMI 2024~\citep{lv2024on}                & 74.21$\pm$1.08                                     & 71.28$\pm$1.20                                     & 74.68$\pm$0.80                                     & 76.00$\pm$0.44                                     \\
EXP&ICML 2020~\citep{feng2020learning}             & 10.00$\pm$0.00                                     & 10.00$\pm$0.00                                     & 10.00$\pm$0.00                                     & 10.00$\pm$0.00                                     \\
MCL-GCE&ICML 2020~\citep{feng2020learning}         & 60.57$\pm$0.60                                     & 58.26$\pm$1.14                                     & 32.31$\pm$9.73                                     & 62.26$\pm$0.72                                     \\
MCL-MSE&ICML 2020~\citep{feng2020learning}         & 61.03$\pm$1.25                                     & 10.00$\pm$0.00                                     & 55.09$\pm$1.96                                     & 63.17$\pm$0.59                                     \\
CC&NeurIPS 2020~\citep{feng2020provably}           & 80.66$\pm$0.23                                     & 80.77$\pm$0.46                                     & 81.44$\pm$0.29                                     & 81.87$\pm$0.11                                     \\
LWS&ICML 2021~\citep{wen2021leveraged}             & 55.31$\pm$0.32                                     & 26.44$\pm$5.35                                     & 55.50$\pm$0.58                                     & 56.17$\pm$0.51                                     \\
IDGP&ICLR 2023~\citep{qiao2023decompositional}     & 82.80$\pm$0.30                                     & 80.72$\pm$0.66                                     & 83.43$\pm$0.17                                     & 83.65$\pm$0.44                                     \\
PC&NeurIPS 2017~\citep{ishida2017learning}         & 71.45$\pm$0.71                                     & 70.34$\pm$0.39                                     & 71.06$\pm$0.56                                     & 72.24$\pm$0.75                                     \\
Forward&ECCV 2018~\citep{yu2018learning}           & 81.19$\pm$0.24                                     & 79.51$\pm$0.33                                     & 80.98$\pm$0.49                                     & 81.57$\pm$0.38                                     \\
NN&ICML 2019~\citep{ishida2019complementary}       & 30.68$\pm$0.39                                     & 25.97$\pm$1.17                                     & 29.50$\pm$0.51                                     & 31.65$\pm$0.38                                     \\
GA&ICML 2019~\citep{ishida2019complementary}       & 37.81$\pm$1.00                                     & 37.51$\pm$1.19                                     & 36.84$\pm$1.37                                     & 38.26$\pm$0.79                                     \\
SCL-EXP&ICML 2020~\citep{chou2020unbiased}         & 79.50$\pm$0.33                                     & 79.57$\pm$0.37                                     & 79.34$\pm$0.53                                     & 80.30$\pm$0.15                                     \\
SCL-NL&ICML 2020~\citep{chou2020unbiased}          & 81.87$\pm$0.07                                     & 81.09$\pm$0.55                                     & 79.96$\pm$0.56                                     & 81.75$\pm$0.14                                     \\
L-W&ICML 2021~\citep{gao2021discriminative}        & 76.76$\pm$0.49                                     & 74.04$\pm$0.87                                     & 74.76$\pm$0.60                                     & 76.76$\pm$0.49                                     \\
OP-W&AISTATS 2023~\citep{liu2023consistent}        & 78.91$\pm$0.22                                     & 78.71$\pm$0.39                                     & 79.64$\pm$0.21                                     & 81.15$\pm$0.49                                     \\
PiCO&ICLR 2022~\citep{wang2022pico}                & 79.20$\pm$0.61                                     & 75.19$\pm$0.59                                     & 79.37$\pm$0.50                                     & 79.88$\pm$0.77                                     \\
ABLE&IJCAI 2022~\citep{xia2022ambiguity}           & 85.86$\pm$0.18                                     & \bf 86.30$\pm$0.14                                     &\bf 86.07$\pm$0.08                                     & \bf 86.46$\pm$0.10                                     \\
CRDPLL&ICML 2022~\citep{wu2022revisiting}          & 81.60$\pm$0.55                                     & 79.82$\pm$0.16                                     & 81.66$\pm$0.52                                     & 82.36$\pm$0.19                                     \\
DIRK&AAAI 2024~\citep{wu2024distilling}            & 85.90$\pm$0.31                                     & 84.96$\pm$0.63                                     & 85.74$\pm$0.42                                     & 85.60$\pm$0.28                                     \\
FREDIS&ICML 2023~\citep{qiao2023fredis}            & 85.01$\pm$0.08                                     & 84.94$\pm$0.07                                     & 85.66$\pm$0.22                                     & 85.74$\pm$0.09                                     \\
ALIM&NeurIPS 2023~\citep{xu2023alim}               & 63.51$\pm$0.73                                     & 61.76$\pm$1.57                                     & 61.42$\pm$1.16                                     & 64.72$\pm$0.27                                     \\
PiCO+&TPAMI 2024~\citep{wang2024picoplus}          & 60.07$\pm$0.13                                     & 57.63$\pm$0.84                                     & 59.01$\pm$1.20                                     & 62.66$\pm$0.33                                     \\
\bottomrule
\end{tabular}}
\end{table}

\begin{table}[ht]
\centering
\scriptsize
\caption{Classification accuracy (mean$\pm$std) of each algorithm on PLCIFAR10-Vaguest with ResNet, where the best performance w.r.t. each metric is shown in bold.}\label{vaguest_res}
\resizebox{0.99\textwidth}{!}{
\begin{tabular}{llcccc}
\toprule
\multicolumn{1}{c}{\textbf{Algorithm}} & \multicolumn{1}{c}{\textbf{Venue}} & {\textbf{w/ CR}} & {\textbf{w/ AA}} &{\textbf{w/ OA}} & {\textbf{w/ OA \& ES}}  \\
\midrule
PRODEN&ICML 2020~\citep{lv2020progressive}         & 74.95$\pm$0.09                                     & 68.54$\pm$0.39                                     & 74.78$\pm$0.64                                     & 76.94$\pm$0.41                                     \\
CAVL&ICLR 2022~\citep{zhang20222exploiting}        & 63.62$\pm$0.27                                     & 61.76$\pm$0.75                                     & 63.70$\pm$0.62                                     & 64.68$\pm$0.18                                     \\
POP&ICML 2023~\citep{xu2023progressive}            & 75.17$\pm$0.53                                     & 67.71$\pm$0.54                                     & 74.35$\pm$0.25                                     & 76.08$\pm$0.18                                     \\
ABS-MAE&TPAMI 2024~\citep{lv2024on}                & 55.58$\pm$0.53                                     & 32.25$\pm$7.18                                     & 55.60$\pm$0.66                                     & 55.84$\pm$0.50                                     \\
ABS-GCE&TPAMI 2024~\citep{lv2024on}                & 75.17$\pm$0.34                                     & 72.86$\pm$0.17                                     & 74.13$\pm$0.68                                     & 75.95$\pm$0.35                                     \\
EXP&ICML 2020~\citep{feng2020learning}             & 63.93$\pm$0.39                                     & 61.77$\pm$0.16                                     & 63.04$\pm$0.58                                     & 64.14$\pm$0.36                                     \\
MCL-GCE&ICML 2020~\citep{feng2020learning}         & 71.74$\pm$0.21                                     & 64.53$\pm$1.67                                     & 70.78$\pm$0.32                                     & 73.28$\pm$0.19                                     \\
MCL-MSE&ICML 2020~\citep{feng2020learning}         & 67.99$\pm$0.96                                     & 65.91$\pm$1.44                                     & 64.94$\pm$0.27                                     & 69.98$\pm$0.49                                     \\
CC&NeurIPS 2020~\citep{feng2020provably}           & 71.78$\pm$0.50                                     & 61.05$\pm$0.22                                     & 70.11$\pm$0.43                                     & 73.26$\pm$0.51                                     \\
LWS&ICML 2021~\citep{wen2021leveraged}             & 60.21$\pm$0.59                                     & 44.12$\pm$10.76                                    & 60.98$\pm$0.46                                     & 61.78$\pm$0.67                                     \\
IDGP&ICLR 2023~\citep{qiao2023decompositional}     & 76.14$\pm$0.18                                     & 71.36$\pm$2.72                                     & 76.05$\pm$0.31                                     & 77.74$\pm$0.19                                     \\
PC&NeurIPS 2017~\citep{ishida2017learning}         & 64.46$\pm$1.88                                     & 66.02$\pm$0.48                                     & 66.59$\pm$0.34                                     & 68.13$\pm$0.29                                     \\
Forward&ECCV 2018~\citep{yu2018learning}           & 70.01$\pm$0.43                                     & 43.09$\pm$12.96                                    & 70.52$\pm$0.07                                     & 70.98$\pm$0.23                                     \\
NN&ICML 2019~\citep{ishida2019complementary}       & 33.23$\pm$0.32                                     & 25.31$\pm$1.05                                     & 30.35$\pm$0.37                                     & 33.12$\pm$0.48                                     \\
GA&ICML 2019~\citep{ishida2019complementary}       & 33.94$\pm$1.01                                     & 33.64$\pm$1.20                                     & 31.35$\pm$0.86                                     & 34.00$\pm$0.98                                     \\
SCL-EXP&ICML 2020~\citep{chou2020unbiased}         & 71.13$\pm$0.51                                     & 65.45$\pm$0.58                                     & 71.33$\pm$0.50                                     & 72.86$\pm$0.28                                     \\
SCL-NL&ICML 2020~\citep{chou2020unbiased}          & 69.66$\pm$0.50                                     & 60.46$\pm$1.54                                     & 69.56$\pm$0.47                                     & 71.52$\pm$0.14                                     \\
L-W&ICML 2021~\citep{gao2021discriminative}        & 69.81$\pm$0.59                                     & 59.19$\pm$1.45                                     & 71.12$\pm$0.19                                     & 72.44$\pm$0.45                                     \\
OP-W&AISTATS 2023~\citep{liu2023consistent}        & 70.53$\pm$0.51                                     & 69.64$\pm$2.67                                     & 73.20$\pm$0.26                                     & 73.21$\pm$0.10                                     \\
PiCO&ICLR 2022~\citep{wang2022pico}                & 74.06$\pm$0.22                                     & 68.33$\pm$0.58                                     & 73.27$\pm$0.46                                     & 74.70$\pm$0.33                                     \\
ABLE&IJCAI 2022~\citep{xia2022ambiguity}           & 75.49$\pm$0.58                                     & 68.14$\pm$0.25                                     & 74.87$\pm$0.49                                     & 76.27$\pm$0.13                                     \\
CRDPLL&ICML 2022~\citep{wu2022revisiting}          & 76.21$\pm$0.58                                     & 70.99$\pm$0.19                                     & 75.70$\pm$0.09                                     & 77.68$\pm$0.46                                     \\
DIRK&AAAI 2024~\citep{wu2024distilling}            & \bf 80.32$\pm$0.15                                     & \bf 75.02$\pm$2.36                                     & \bf 80.10$\pm$0.33                                     & \bf 81.08$\pm$0.32                                     \\
FREDIS&ICML 2023~\citep{qiao2023fredis}            & 74.57$\pm$0.90                                     & 67.72$\pm$0.13                                     & 73.45$\pm$0.32                                     & 76.94$\pm$0.24                                     \\
ALIM&NeurIPS 2023~\citep{xu2023alim}               & 65.49$\pm$1.05                                     & 66.51$\pm$0.83                                     & 67.57$\pm$1.93                                     & 70.59$\pm$1.21                                     \\
PiCO+&TPAMI 2024~\citep{wang2024picoplus}          & 61.59$\pm$1.55                                     & 59.65$\pm$0.84                                     & 61.27$\pm$0.12                                     & 64.75$\pm$0.30                                     \\
\bottomrule
\end{tabular}}
\end{table}

\begin{table}[ht]
\centering
\scriptsize
\caption{Classification accuracy (mean$\pm$std) of each algorithm on PLCIFAR10-Aggregate with DenseNet, where the best performance w.r.t. each metric is shown in bold.}\label{aggregate_res_dn}
\resizebox{0.99\textwidth}{!}{
\begin{tabular}{llcccc}
\toprule
\multicolumn{1}{c}{\textbf{Algorithm}} & \multicolumn{1}{c}{\textbf{Venue}} & {\textbf{w/ CR}} & {\textbf{w/ AA}} &{\textbf{w/ OA}} & {\textbf{w/ OA \& ES}}  \\
\midrule
PRODEN&ICML 2020~\citep{lv2020progressive}         & 80.94$\pm$0.54                                     & \bf 81.54$\pm$0.72                                     & 80.94$\pm$0.90                                     & 81.30$\pm$0.74                                     \\
CAVL&ICLR 2022~\citep{zhang20222exploiting}        & 61.12$\pm$2.71                                     & 60.17$\pm$2.34                                     & 63.52$\pm$1.80                                     & 63.82$\pm$1.68                                     \\
POP&ICML 2023~\citep{xu2023progressive}            & 80.99$\pm$0.31                                     & 80.90$\pm$0.53                                     & 81.20$\pm$0.41                                     & 81.38$\pm$0.43                                     \\
ABS-MAE&TPAMI 2024~\citep{lv2024on}                & 53.93$\pm$2.12                                     & 50.56$\pm$1.85                                     & 52.56$\pm$1.68                                     & 53.85$\pm$2.05                                     \\
ABS-GCE&TPAMI 2024~\citep{lv2024on}                & 72.48$\pm$0.34                                     & 71.50$\pm$0.46                                     & 73.05$\pm$0.71                                     & 74.28$\pm$0.70                                     \\
EXP&ICML 2020~\citep{feng2020learning}             & 10.00$\pm$0.00                                     & 10.00$\pm$0.00                                     & 10.00$\pm$0.00                                     & 10.00$\pm$0.00                                     \\
MCL-GCE&ICML 2020~\citep{feng2020learning}         & 58.82$\pm$0.31                                     & 57.81$\pm$0.24                                     & 36.47$\pm$10.98                                    & 61.34$\pm$0.03                                     \\
MCL-MSE&ICML 2020~\citep{feng2020learning}         & 57.29$\pm$0.22                                     & 10.00$\pm$0.00                                     & 54.27$\pm$2.35                                     & 59.37$\pm$0.38                                     \\
CC&NeurIPS 2020~\citep{feng2020provably}           & 78.28$\pm$0.82                                     & 77.64$\pm$0.67                                     & 78.52$\pm$0.38                                     & 78.97$\pm$0.49                                     \\
LWS&ICML 2021~\citep{wen2021leveraged}             & 47.57$\pm$0.20                                     & 41.64$\pm$1.91                                     & 48.48$\pm$0.42                                     & 48.90$\pm$0.41                                     \\
IDGP&ICLR 2023~\citep{qiao2023decompositional}     & 77.49$\pm$0.86                                     & 76.07$\pm$0.80                                     & 78.41$\pm$0.84                                     & 79.03$\pm$0.86                                     \\
PC&NeurIPS 2017~\citep{ishida2017learning}         & 65.60$\pm$0.13                                     & 65.95$\pm$0.54                                     & 66.12$\pm$0.34                                     & 66.55$\pm$0.47                                     \\
Forward&ECCV 2018~\citep{yu2018learning}           & 78.74$\pm$0.48                                     & 78.02$\pm$0.30                                     & 78.38$\pm$0.39                                     & 79.39$\pm$0.19                                     \\
NN&ICML 2019~\citep{ishida2019complementary}       & 31.51$\pm$0.98                                     & 26.01$\pm$1.37                                     & 30.22$\pm$0.48                                     & 32.97$\pm$0.79                                     \\
GA&ICML 2019~\citep{ishida2019complementary}       & 37.21$\pm$0.58                                     & 36.85$\pm$0.46                                     & 37.28$\pm$0.18                                     & 37.86$\pm$0.20                                     \\
SCL-EXP&ICML 2020~\citep{chou2020unbiased}         & 78.67$\pm$0.71                                     & 78.27$\pm$0.69                                     & 78.38$\pm$0.34                                     & 78.83$\pm$0.32                                     \\
SCL-NL&ICML 2020~\citep{chou2020unbiased}          & 79.26$\pm$0.59                                     & 78.40$\pm$0.98                                     & 78.29$\pm$0.27                                     & 79.26$\pm$0.59                                     \\
L-W&ICML 2021~\citep{gao2021discriminative}        & 71.68$\pm$0.53                                     & 71.77$\pm$0.29                                     & 72.24$\pm$0.69                                     & 73.78$\pm$0.56                                     \\
OP-W&AISTATS 2023~\citep{liu2023consistent}        & 79.95$\pm$0.36                                     & 78.86$\pm$0.43                                     & 80.04$\pm$0.16                                     & 80.36$\pm$0.32                                     \\
PiCO&ICLR 2022~\citep{wang2022pico}                & 74.89$\pm$2.14                                     & 74.23$\pm$0.65                                     & 75.81$\pm$1.58                                     & 76.37$\pm$1.61                                     \\
ABLE&IJCAI 2022~\citep{xia2022ambiguity}           & \bf 81.38$\pm$0.33                                     & 81.40$\pm$0.34                                     & \bf 81.21$\pm$0.49                                     & 81.28$\pm$0.61                                     \\
CRDPLL&ICML 2022~\citep{wu2022revisiting}          & 74.97$\pm$0.99                                     & 74.90$\pm$0.52                                     & 75.36$\pm$0.59                                     & 75.67$\pm$0.66                                     \\
DIRK&AAAI 2024~\citep{wu2024distilling}            & 77.74$\pm$0.64                                     & 77.83$\pm$0.53                                     & 77.86$\pm$0.67                                     & 77.85$\pm$0.76                                     \\
FREDIS&ICML 2023~\citep{qiao2023fredis}            & 81.25$\pm$0.56                                     & 81.08$\pm$0.80                                     & 81.11$\pm$0.69                                     & \bf 81.66$\pm$0.51                                     \\
ALIM&NeurIPS 2023~\citep{xu2023alim}               & 56.61$\pm$1.02                                     & 57.29$\pm$0.32                                     & 58.46$\pm$0.38                                     & 59.75$\pm$0.64                                     \\
PiCO+&TPAMI 2024~\citep{wang2024picoplus}          & 57.05$\pm$0.71                                     & 54.01$\pm$1.71                                     & 56.02$\pm$1.02                                     & 58.45$\pm$0.15                                     \\
\bottomrule
\end{tabular}}
\end{table}

\begin{table}[ht]
\centering
\scriptsize
\caption{Classification accuracy (mean$\pm$std) of each algorithm on PLCIFAR10-Vaguest with DenseNet, where the best performance w.r.t. each metric is shown in bold.}\label{vaguest_res_dn}
\resizebox{0.99\textwidth}{!}{
\begin{tabular}{llcccc}
\toprule
\multicolumn{1}{c}{\textbf{Algorithm}} & \multicolumn{1}{c}{\textbf{Venue}} & {\textbf{w/ CR}} & {\textbf{w/ AA}} &{\textbf{w/ OA}} & {\textbf{w/ OA \& ES}}  \\
\midrule
PRODEN&ICML 2020~\citep{lv2020progressive}         & 72.73$\pm$0.61                                     & 67.42$\pm$1.06                                     & 72.53$\pm$0.48                                     & 73.98$\pm$0.11                                     \\
CAVL&ICLR 2022~\citep{zhang20222exploiting}        & 55.93$\pm$1.95                                     & 58.85$\pm$1.00                                     & 59.01$\pm$1.29                                     & 59.81$\pm$1.49                                     \\
POP&ICML 2023~\citep{xu2023progressive}            & 71.92$\pm$0.87                                     & 69.18$\pm$1.64                                     & 71.90$\pm$0.13                                     & 72.85$\pm$0.21                                     \\
ABS-MAE&TPAMI 2024~\citep{lv2024on}                & 57.07$\pm$2.16                                     & 45.50$\pm$5.21                                     & 57.69$\pm$1.98                                     & 57.94$\pm$2.01                                     \\
ABS-GCE&TPAMI 2024~\citep{lv2024on}                & \bf 73.30$\pm$0.46                                     & 69.85$\pm$1.27                                     & 72.75$\pm$0.48                                     & 73.63$\pm$0.42                                     \\
EXP&ICML 2020~\citep{feng2020learning}             & 61.50$\pm$0.69                                     & 55.26$\pm$2.27                                     & 61.44$\pm$0.54                                     & 62.56$\pm$0.52                                     \\
MCL-GCE&ICML 2020~\citep{feng2020learning}         & 66.57$\pm$1.37                                     & 61.90$\pm$1.36                                     & 67.88$\pm$1.10                                     & 69.57$\pm$0.70                                     \\
MCL-MSE&ICML 2020~\citep{feng2020learning}         & 63.37$\pm$1.09                                     & 63.79$\pm$0.92                                     & 63.08$\pm$0.42                                     & 65.26$\pm$0.55                                     \\
CC&NeurIPS 2020~\citep{feng2020provably}           & 68.77$\pm$0.29                                     & 60.92$\pm$0.05                                     & 69.62$\pm$0.33                                     & 71.49$\pm$0.48                                     \\
LWS&ICML 2021~\citep{wen2021leveraged}             & 57.85$\pm$1.47                                     & 54.65$\pm$1.65                                     & 58.16$\pm$2.38                                     & 59.09$\pm$2.00                                     \\
IDGP&ICLR 2023~\citep{qiao2023decompositional}     & 72.02$\pm$0.64                                     & 70.55$\pm$0.91                                     & \bf 72.98$\pm$0.33                                     & \bf 74.19$\pm$0.41                                     \\
PC&NeurIPS 2017~\citep{ishida2017learning}         & 59.99$\pm$0.93                                     & 60.28$\pm$0.28                                     & 63.11$\pm$0.48                                     & 63.93$\pm$0.90                                     \\
Forward&ECCV 2018~\citep{yu2018learning}           & 66.35$\pm$0.97                                     & 60.12$\pm$1.52                                     & 66.54$\pm$0.58                                     & 67.99$\pm$0.14                                     \\
NN&ICML 2019~\citep{ishida2019complementary}       & 31.08$\pm$0.91                                     & 26.52$\pm$0.37                                     & 30.23$\pm$0.63                                     & 33.54$\pm$0.12                                     \\
GA&ICML 2019~\citep{ishida2019complementary}       & 35.55$\pm$0.25                                     & 35.52$\pm$0.46                                     & 34.52$\pm$0.55                                     & 35.75$\pm$0.30                                     \\
SCL-EXP&ICML 2020~\citep{chou2020unbiased}         & 68.99$\pm$0.34                                     & 65.59$\pm$0.82                                     & 69.41$\pm$0.84                                     & 70.72$\pm$0.49                                     \\
SCL-NL&ICML 2020~\citep{chou2020unbiased}          & 66.90$\pm$0.37                                     & 61.42$\pm$0.22                                     & 65.87$\pm$1.03                                     & 68.71$\pm$0.64                                     \\
L-W&ICML 2021~\citep{gao2021discriminative}        & 68.28$\pm$0.83                                     & 63.52$\pm$0.71                                     & 68.20$\pm$0.48                                     & 69.66$\pm$0.43                                     \\
OP-W&AISTATS 2023~\citep{liu2023consistent}        & 69.91$\pm$1.42                                     & 70.25$\pm$0.84                                     & 71.55$\pm$0.49                                     & 72.55$\pm$0.41                                     \\
PiCO&ICLR 2022~\citep{wang2022pico}                & 71.76$\pm$0.28                                     & 69.30$\pm$0.93                                     & 70.89$\pm$0.59                                     & 72.27$\pm$0.48                                     \\
ABLE&IJCAI 2022~\citep{xia2022ambiguity}           & 71.68$\pm$1.01                                     & 68.22$\pm$1.08                                     & 72.10$\pm$0.20                                     & 73.44$\pm$0.51                                     \\
CRDPLL&ICML 2022~\citep{wu2022revisiting}          & 70.73$\pm$0.75                                     & 70.69$\pm$0.17                                     & 71.99$\pm$0.67                                     & 72.46$\pm$0.29                                     \\
DIRK&AAAI 2024~\citep{wu2024distilling}            & 70.47$\pm$0.32                                     & \bf 71.06$\pm$0.48                                     & 71.26$\pm$0.19                                     & 71.60$\pm$0.75                                     \\
FREDIS&ICML 2023~\citep{qiao2023fredis}            & 71.38$\pm$0.41                                     & 65.92$\pm$0.13                                     & 71.33$\pm$0.30                                     & 72.95$\pm$0.25                                     \\
ALIM&NeurIPS 2023~\citep{xu2023alim}               & 61.91$\pm$1.23                                     & 62.04$\pm$1.10                                     & 63.84$\pm$0.49                                     & 65.37$\pm$0.72                                     \\
PiCO+&TPAMI 2024~\citep{wang2024picoplus}          & 62.45$\pm$0.74                                     & 59.59$\pm$1.79                                     & 60.67$\pm$0.89                                     & 62.82$\pm$1.10                                     \\
\bottomrule
\end{tabular}}
\end{table}
\subsection{Experimental Results on Image Datasets}
Tables~\ref{aggregate_res} and~\ref{vaguest_res} report the experimental results on PLCIFAR10-Aggregate and PLCIFAR10-Vaguest with ResNet. Tables~\ref{aggregate_res_dn} and~\ref{vaguest_res_dn} report the experimental results on PLCIFAR10-Aggregate and PLCIFAR10-Vaguest with DenseNet. We do not include the results of VALEN because it requires a very large computational and memory budget. The following conclusions can be drawn. First, compared to the results with OA and ES, CR is very effective in most cases. The performance of AA decreases in some cases. We speculate that this is because the modeling results may not be reliable, and the results of AA may deviate from the true accuracy. Second, PRODEN and its variants are still strong in performance, but there is no algorithm that can outperform all other algorithms in all cases. Third, noisy partial labels have a significant impact on the model performance of most algorithms. Therefore, it is still promising to develop effective noisy PLL algorithms in more practical scenarios.

\begin{figure*}[t]
  \centering
  \subfigure[Running Time]{
    \includegraphics[width=0.48\textwidth]{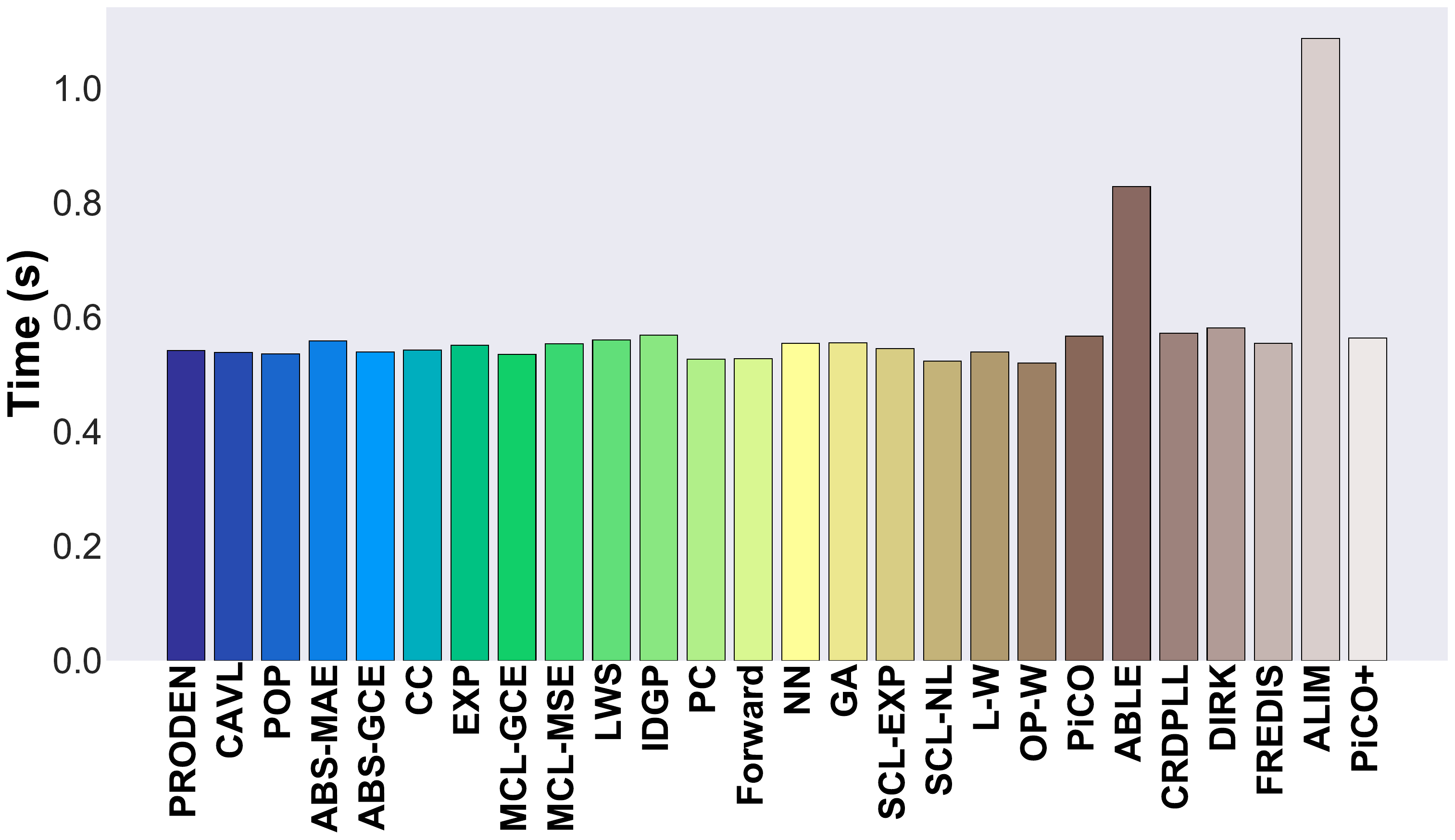}\label{fig:time}
  }
  \subfigure[GPU Memory Utilization]{
    \includegraphics[width=0.48\textwidth]{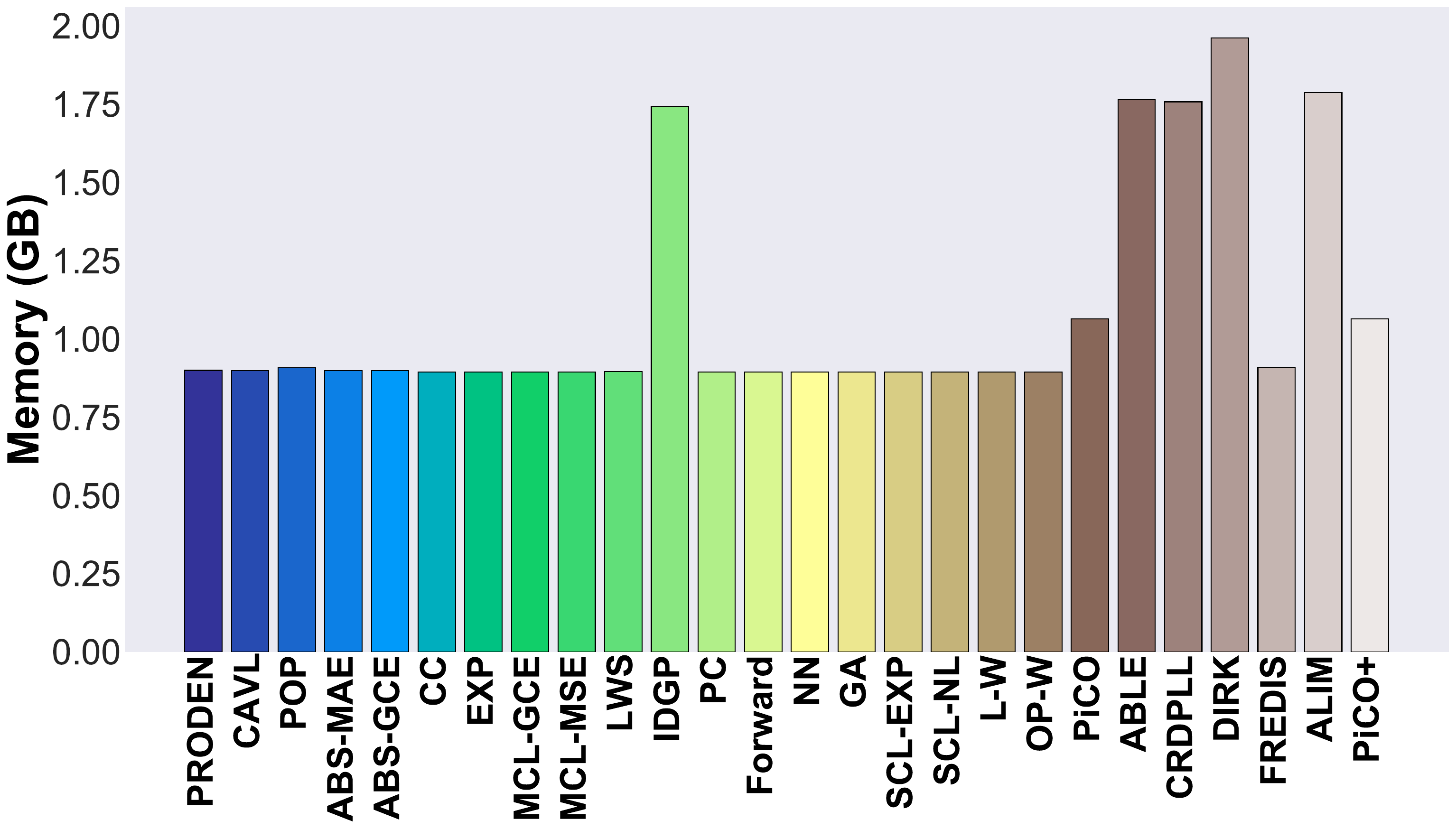}\label{fig:memory}
  }
  \caption{Running time and GPU memory utilization for each running step of different PLL algorithms on PLCIFAR10-Vaguest with DenseNet.}\label{fig:efficiency}
\end{figure*}

\subsection{Complexity Analysis}
Figure~\ref{fig:efficiency} shows the running time and GPU memory usage for each running step of different PLL algorithms on PLCIFAR10-Vaguest with DenseNet. We can see that some holistic deep PLL algorithms with strong regularization terms and complicated training strategies can achieve better performance, but they also take more time and use more memory. Therefore, it is important to consider the tradeoff between performance and resource consumption when deciding which algorithm to use given limited computational resources. 
\section{Conclusion}\label{conclusion_sec}
In this paper, we proposed the first PLL benchmark to standardize the performance evaluation of state-of-the-art deep PLL algorithms. We proposed new model selection criteria to fill in the gap of model selection problems for PLL. We also introduced PLCIFAR10, a novel image dataset of human-annotated partial labels. We hope that the availability of this benchmark can promote fair, practical, and standardized evaluation of PLL algorithms in the future. A limitation of our work is that the datasets are relatively small, and we will leave the exploration of large-scale PLL datasets as our future work. In addition, our experiments are conducted with simple deep neural networks. It is also interesting to explore the incorporation of foundation models to improve the model performance of PLL algorithms in the future.
\newpage
\subsubsection*{Acknowledgments}
The authors thank Wei-I Lin, Takashi Ishida, and Yu-Jie Zhang for their helpful discussions and suggestions, Wei-Xuan Bao for his help with the datasets, and Yuko Kawashima for her help with the funding management. The authors thank the anonymous reviewers for their helpful comments. WW was supported by the SGU MEXT Scholarship, by the Junior Research Associate~(JRA) program of RIKEN, and by Microsoft Research Asia. MLZ was supported by National Science Foundation of China (62225602). MS was supported by the Institute for AI and Beyond, UTokyo and by a grant from Apple, Inc. Any views, opinions, findings, and conclusions or recommendations expressed in this material are those of the authors and should not be interpreted as reflecting the views, policies or position, either expressed or implied, of Apple Inc.

\subsubsection*{Ethics Statements}
This paper does not raise any ethical concerns. This paper contains a human subjects study where the data collection process was conducted on Amazon MTurk. The privacy of the annotators was strictly protected by Amazon MTurk. We strictly followed the Code of Ethics during the dataset collection process.
 
\bibliographystyle{iclr2025_conference}
\bibliography{plench}

\begin{thebibliography}{79}
\providecommand{\natexlab}[1]{#1}
\providecommand{\url}[1]{\texttt{#1}}
\expandafter\ifx\csname urlstyle\endcsname\relax
  \providecommand{\doi}[1]{doi: #1}\else
  \providecommand{\doi}{doi: \begingroup \urlstyle{rm}\Url}\fi

\bibitem[Briggs et~al.(2012)Briggs, Fern, and Raich]{briggs2012rank}
Forrest Briggs, Xiaoli~Z. Fern, and Raviv Raich.
\newblock Rank-loss support instance machines for {MIML} instance annotation.
\newblock In \emph{Proceedings of the 18th ACM SIGKDD International Conference on Knowledge Discovery \& Data Mining}, pp.\  534--542, 2012.

\bibitem[Charoenphakdee et~al.(2021)Charoenphakdee, Vongkulbhisal, Chairatanakul, and Sugiyama]{charoenphakdee2021on}
Nontawat Charoenphakdee, Jayakorn Vongkulbhisal, Nuttapong Chairatanakul, and Masashi Sugiyama.
\newblock On focal loss for class-posterior probability estimation: A theoretical perspective.
\newblock In \emph{Proceedings of the 2021 IEEE/CVF Conference on Computer Vision and Pattern Recognition}, pp.\  5202--5211, 2021.

\bibitem[Chen et~al.(2018)Chen, Patel, and Chellappa]{chen2018learning}
Ching-Hui Chen, Vishal~M. Patel, and Rama Chellappa.
\newblock Learning from ambiguously labeled face images.
\newblock \emph{IEEE Transactions on Pattern Analysis and Machine Intelligence}, 40\penalty0 (7):\penalty0 1653--1667, 2018.

\bibitem[Chou et~al.(2020)Chou, Niu, Lin, and Sugiyama]{chou2020unbiased}
Yu-Ting Chou, Gang Niu, Hsuan-Tien Lin, and Masashi Sugiyama.
\newblock Unbiased risk estimators can mislead: A case study of learning with complementary labels.
\newblock In \emph{Proceedings of the 37th International Conference on Machine Learning}, pp.\  1929--1938, 2020.

\bibitem[Collins et~al.(2022)Collins, Bhatt, and Weller]{collins2022eliciting}
Katherine~M. Collins, Umang Bhatt, and Adrian Weller.
\newblock Eliciting and learning with soft labels from every annotator.
\newblock In \emph{Proceedings of the 10th AAAI conference on human computation and crowdsourcing}, pp.\  40--52, 2022.

\bibitem[Cour et~al.(2011)Cour, Sapp, and Taskar]{cour2011learning}
Timothee Cour, Ben Sapp, and Ben Taskar.
\newblock Learning from partial labels.
\newblock \emph{Journal of Machine Learning Research}, 12\penalty0 (May):\penalty0 1501--1536, 2011.

\bibitem[Feng et~al.(2020{\natexlab{a}})Feng, Kaneko, Han, Niu, An, and Sugiyama]{feng2020learning}
Lei Feng, Takuo Kaneko, Bo~Han, Gang Niu, Bo~An, and Masashi Sugiyama.
\newblock Learning with multiple complementary labels.
\newblock In \emph{Proceedings of the 37th International Conference on Machine Learning}, pp.\  3072--3081, 2020{\natexlab{a}}.

\bibitem[Feng et~al.(2020{\natexlab{b}})Feng, Lv, Han, Xu, Niu, Geng, An, and Sugiyama]{feng2020provably}
Lei Feng, Jiaqi Lv, Bo~Han, Miao Xu, Gang Niu, Xin Geng, Bo~An, and Masashi Sugiyama.
\newblock Provably consistent partial-label learning.
\newblock In \emph{Advances in Neural Information Processing Systems 33}, pp.\  10948--10960, 2020{\natexlab{b}}.

\bibitem[Gao \& Zhang(2021)Gao and Zhang]{gao2021discriminative}
Yi~Gao and Min-Ling Zhang.
\newblock Discriminative complementary-label learning with weighted loss.
\newblock In \emph{Proceedings of the 38th International Conference on Machine Learning}, pp.\  3587--3597, 2021.

\bibitem[Gao et~al.(2022)Gao, Sun, Yang, Ren, Xiong, Engeler, Burazer, Wildling, Daniel, and Boning]{gao2022learning}
Zhengqi Gao, Fan-Keng Sun, Mingran Yang, Sucheng Ren, Zikai Xiong, Marc Engeler, Antonio Burazer, Linda Wildling, Luca Daniel, and Duane~S. Boning.
\newblock Learning from multiple annotator noisy labels via sample-wise label fusion.
\newblock In \emph{Proceedings of the 17th European Conference on Computer Vision}, pp.\  407--422, 2022.

\bibitem[Garrette \& Baldridge(2013)Garrette and Baldridge]{garrette2013learning}
Dan Garrette and Jason Baldridge.
\newblock Learning a part-of-speech tagger from two hours of annotation.
\newblock In \emph{Proceedings of the 2013 Conference of the North American Chapter of the Association for Computational Linguistics: Human Language Technologies}, pp.\  138--147, 2013.

\bibitem[Gneiting \& Raftery(2007)Gneiting and Raftery]{gneiting2007strictly}
Tilmann Gneiting and Adrian~E. Raftery.
\newblock Strictly proper scoring rules, prediction, and estimation.
\newblock \emph{Journal of the American statistical Association}, 102\penalty0 (477):\penalty0 359--378, 2007.

\bibitem[Gong et~al.(2018)Gong, Liu, Tang, Yang, Yang, and Tao]{gong2018regularization}
Chen Gong, Tongliang Liu, Yuanyan Tang, Jian Yang, Jie Yang, and Dacheng Tao.
\newblock A regularization approach for instance-based superset label learning.
\newblock \emph{IEEE Transactions on Cybernetics}, 48\penalty0 (3):\penalty0 967--978, 2018.

\bibitem[Gong et~al.(2022)Gong, Yuan, and Bao]{gong2022partial}
Xiuwen Gong, Dong Yuan, and Wei Bao.
\newblock Partial label learning via label influence function.
\newblock In \emph{Proceedings of the 39th International Conference on Machine Learning}, pp.\  7665--7678, 2022.

\bibitem[Goswami et~al.(2023)Goswami, Sanil, Choudhry, Srinivasan, Udompanyawit, and Dubrawski]{goswami2023aqua}
Mononito Goswami, Vedant Sanil, Arjun Choudhry, Arvind Srinivasan, Chalisa Udompanyawit, and Artur Dubrawski.
\newblock {AQuA}: A benchmarking tool for label quality assessment.
\newblock In \emph{Advances in Neural Information Processing Systems 36}, pp.\  79792--79807, 2023.

\bibitem[Guillaumin et~al.(2010)Guillaumin, Verbeek, and Schmid]{guillaumin2010multiple}
Matthieu Guillaumin, Jakob Verbeek, and Cordelia Schmid.
\newblock Multiple instance metric learning from automatically labeled bags of faces.
\newblock In \emph{Proceedings of the 11th European Conference on Computer Vision}, pp.\  634--647, 2010.

\bibitem[Gulrajani \& Lopez-Paz(2021)Gulrajani and Lopez-Paz]{gulrajani2021in}
Ishaan Gulrajani and David Lopez-Paz.
\newblock In search of lost domain generalization.
\newblock In \emph{Proceedings of the 9th International Conference on Learning Representations}, 2021.

\bibitem[He et~al.(2016)He, Zhang, Ren, and Sun]{he2016deep}
Kaiming He, Xiangyu Zhang, Shaoqing Ren, and Jian Sun.
\newblock Deep residual learning for image recognition.
\newblock In \emph{Proceedings of the 2016 IEEE Conference on Computer Vision and Pattern Recognition}, pp.\  770--778, 2016.

\bibitem[He et~al.(2024)He, Wang, Yang, and Feng]{he2024candidate}
Shuo He, Chaojie Wang, Guowu Yang, and Lei Feng.
\newblock Candidate label set pruning: A data-centric perspective for deep partial-label learning.
\newblock In \emph{Proceedings of the 12th International Conference on Learning Representations}, 2024.

\bibitem[Hendrycks et~al.(2018)Hendrycks, Mazeika, Wilson, and Gimpel]{hendrycks2018using}
Dan Hendrycks, Mantas Mazeika, Duncan Wilson, and Kevin Gimpel.
\newblock Using trusted data to train deep networks on labels corrupted by severe noise.
\newblock In \emph{Advances in Neural Information Processing Systems 31}, pp.\  10477--10486, 2018.

\bibitem[Huang et~al.(2017)Huang, Liu, Van Der~Maaten, and Weinberger]{huang2017densely}
Gao Huang, Zhuang Liu, Laurens Van Der~Maaten, and Kilian~Q. Weinberger.
\newblock Densely connected convolutional networks.
\newblock In \emph{Proceedings of the 2017 IEEE Conference on Computer Vision and Pattern Recognition}, pp.\  4700--4708, 2017.

\bibitem[Huiskes \& Lew(2008)Huiskes and Lew]{huiskes2008mir}
Mark~J. Huiskes and Michael~S. Lew.
\newblock The {MIR} flickr retrieval evaluation.
\newblock In \emph{Proceedings of the 1st ACM International Conference on Multimedia Information Retrieval}, pp.\  39--43, 2008.

\bibitem[Ishida et~al.(2017)Ishida, Niu, Hu, and Sugiyama]{ishida2017learning}
Takashi Ishida, Gang Niu, Weihua Hu, and Masashi Sugiyama.
\newblock Learning from complementary labels.
\newblock In \emph{Advances in Neural Information Processing Systems 30}, pp.\  5644--5654, 2017.

\bibitem[Ishida et~al.(2019)Ishida, Niu, Menon, and Sugiyama]{ishida2019complementary}
Takashi Ishida, Gang Niu, Aditya~K. Menon, and Masashi Sugiyama.
\newblock Complementary-label learning for arbitrary losses and models.
\newblock In \emph{Proceedings of the 36th International Conference on Machine Learning}, pp.\  2971--2980, 2019.

\bibitem[Jiang et~al.(2020)Jiang, Huang, Liu, and Yang]{jiang2020beyond}
Lu~Jiang, Di~Huang, Mason Liu, and Weilong Yang.
\newblock Beyond synthetic noise: Deep learning on controlled noisy labels.
\newblock In \emph{Proceedings of the 37th International Conference on Machine Learning}, pp.\  4804--4815, 2020.

\bibitem[Jin \& Ghahramani(2002)Jin and Ghahramani]{jin2002learning}
Rong Jin and Zoubin Ghahramani.
\newblock Learning with multiple labels.
\newblock In \emph{Advances in Neural Information Processing systems 15}, pp.\  897--904, 2002.

\bibitem[Johan et~al.(2009)Johan, Bosco, Mazzei, et~al.]{johan2009converting}
Bos Johan, Cristina Bosco, Alessandro Mazzei, et~al.
\newblock Converting a dependency treebank to a categorial grammar treebank for {Italian}.
\newblock In \emph{Proceedings of the 8th International Workshop on Treebanks and Linguistic Theories}, pp.\  27--38, 2009.

\bibitem[Khetan et~al.(2018)Khetan, Lipton, and Anandkumar]{khetan2018learning}
Ashish Khetan, Zachary~C. Lipton, and Anima Anandkumar.
\newblock Learning from noisy singly-labeled data.
\newblock In \emph{Proceedings of the 6th International Conference on Learning Representations}, 2018.

\bibitem[Kingma \& Ba(2015)Kingma and Ba]{kingma2015adam}
Diederik~P. Kingma and Jimmy Ba.
\newblock Adam: {A} method for stochastic optimization.
\newblock In \emph{Proceedings of the 3rd International Conference on Learning Representations}, 2015.

\bibitem[Krizhevsky \& Hinton(2009)Krizhevsky and Hinton]{krizhevsky2009learning}
Alex Krizhevsky and Geoffrey~E. Hinton.
\newblock Learning multiple layers of features from tiny images.
\newblock Technical report, University of Toronto, 2009.

\bibitem[Li et~al.(2021)Li, Li, Ouyang, and Wang]{li2021detecting}
Changchun Li, Ximing Li, Jihong Ouyang, and Yiming Wang.
\newblock Detecting the fake candidate instances: Ambiguous label learning with generative adversarial networks.
\newblock In \emph{Proceedings of the 30th ACM International Conference on Information \& Knowledge Management}, pp.\  903–912, 2021.

\bibitem[Li et~al.(2023)Li, Jiang, Li, Wang, and Ouyang]{li2023learning}
Ximing Li, Yuanzhi Jiang, Changchun Li, Yiyuan Wang, and Jihong Ouyang.
\newblock Learning with partial labels from semi-supervised perspective.
\newblock In \emph{Proceedings of the 37th {AAAI} Conference on Artificial Intelligence}, pp.\  8666--8674, 2023.

\bibitem[Liu \& Dietterich(2014)Liu and Dietterich]{liu2014learnability}
Liping Liu and Thomas Dietterich.
\newblock Learnability of the superset label learning problem.
\newblock In \emph{Proceedings of the 31st International Conference on Machine Learning}, pp.\  1629--1637, 2014.

\bibitem[Liu \& Dietterich(2012)Liu and Dietterich]{liu2012conditional}
Liping Liu and Thomas~G. Dietterich.
\newblock A conditional multinomial mixture model for superset label learning.
\newblock In \emph{Advances in Neural Information Processing Systems 25}, pp.\  548--556, 2012.

\bibitem[Liu et~al.(2023)Liu, Cao, Zhang, Feng, and An]{liu2023consistent}
Shuqi Liu, Yuzhou Cao, Qiaozhen Zhang, Lei Feng, and Bo~An.
\newblock Consistent complementary-label learning via order-preserving losses.
\newblock In \emph{Proceedings of the 26th International Conference on Artificial Intelligence and Statistics}, pp.\  8734--8748, 2023.

\bibitem[Luo \& Orabona(2010)Luo and Orabona]{luo2010learning}
Jie Luo and Francesco Orabona.
\newblock Learning from candidate labeling sets.
\newblock In \emph{Advances in Neural Information Processing Systems 23}, pp.\  1504--1512, 2010.

\bibitem[Lv et~al.(2020)Lv, Xu, Feng, Niu, Geng, and Sugiyama]{lv2020progressive}
Jiaqi Lv, Miao Xu, Lei Feng, Gang Niu, Xin Geng, and Masashi Sugiyama.
\newblock Progressive identification of true labels for partial-label learning.
\newblock In \emph{Proceedings of the 37th International Conference on Machine Learning}, pp.\  6500--6510, 2020.

\bibitem[Lv et~al.(2024{\natexlab{a}})Lv, Liu, Feng, Xu, Xu, An, Niu, Geng, and Sugiyama]{lv2024on}
Jiaqi Lv, Biao Liu, Lei Feng, Ning Xu, Miao Xu, Bo~An, Gang Niu, Xin Geng, and Masashi Sugiyama.
\newblock On the robustness of average losses for partial-label learning.
\newblock \emph{IEEE Transactions on Pattern Analysis and Machine Intelligence}, 46\penalty0 (5):\penalty0 2569--2583, 2024{\natexlab{a}}.

\bibitem[Lv et~al.(2024{\natexlab{b}})Lv, Liu, Xia, Xu, Xu, Niu, Zhang, Sugiyama, and Geng]{lv2024what}
Jiaqi Lv, Yangfan Liu, Shiyu Xia, Ning Xu, Miao Xu, Gang Niu, Min-Ling Zhang, Masashi Sugiyama, and Xin Geng.
\newblock What makes partial-label learning algorithms effective?
\newblock In \emph{Advances in Neural Information Processing Systems 37}, 2024{\natexlab{b}}.

\bibitem[Lyu et~al.(2021)Lyu, Feng, Wang, Lang, and Li]{lyu2021gmpll}
Gengyu Lyu, Songhe Feng, Tao Wang, Congyan Lang, and Yidong Li.
\newblock {GM-PLL}: Graph matching based partial label learning.
\newblock \emph{IEEE Transactions on Knowledge and Data Engineering}, 33\penalty0 (2):\penalty0 521--535, 2021.

\bibitem[Lyu et~al.(2022)Lyu, Feng, Wang, and Lang]{lyu2022self}
Gengyu Lyu, Songhe Feng, Tao Wang, and Congyan Lang.
\newblock A self-paced regularization framework for partial-label learning.
\newblock \emph{IEEE Transactions on Cybernetics}, 52\penalty0 (2):\penalty0 899--911, 2022.

\bibitem[Nair \& Hinton(2010)Nair and Hinton]{nair2010rectified}
Vinod Nair and Geoffrey~E. Hinton.
\newblock Rectified linear units improve restricted boltzmann machines.
\newblock In \emph{Proceedings of the 27th International Conference on Machine Learning}, pp.\  807--814, 2010.

\bibitem[Oquab et~al.(2015)Oquab, Bottou, Laptev, and Sivic]{oquab2015object}
Maxime Oquab, Leon Bottou, Ivan Laptev, and Josef Sivic.
\newblock Is object localization for free? {W}eakly-supervised learning with convolutional neural networks.
\newblock In \emph{Proceedings of the 2015 IEEE Conference on Computer Vision and Pattern Recognition}, pp.\  685--694, 2015.

\bibitem[Paszke et~al.(2019)Paszke, Gross, Massa, Lerer, Bradbury, Chanan, Killeen, Lin, Gimelshein, Antiga, et~al.]{paszke2019pytorch}
Adam Paszke, Sam Gross, Francisco Massa, Adam Lerer, James Bradbury, Gregory Chanan, Trevor Killeen, Zeming Lin, Natalia Gimelshein, Luca Antiga, et~al.
\newblock Pytorch: An imperative style, high-performance deep learning library.
\newblock In \emph{Advances in Neural Information Processing Systems 32}, pp.\  8026--8037, 2019.

\bibitem[Peterson et~al.(2019)Peterson, Battleday, Griffiths, and Russakovsky]{peterson2019human}
Joshua~C. Peterson, Ruairidh~M. Battleday, Thomas~L. Griffiths, and Olga Russakovsky.
\newblock Human uncertainty makes classification more robust.
\newblock In \emph{Proceedings of the 2019 IEEE/CVF International Conference on Computer Vision}, pp.\  9617--9626, 2019.

\bibitem[Qiao et~al.(2023{\natexlab{a}})Qiao, Xu, and Geng]{qiao2023decompositional}
Congyu Qiao, Ning Xu, and Xin Geng.
\newblock Decompositional generation process for instance-dependent partial label learning.
\newblock In \emph{Proceedings of the 11th International Conference on Learning Representations}, 2023{\natexlab{a}}.

\bibitem[Qiao et~al.(2023{\natexlab{b}})Qiao, Xu, Lv, Ren, and Geng]{qiao2023fredis}
Congyu Qiao, Ning Xu, Jiaqi Lv, Yi~Ren, and Xin Geng.
\newblock {FREDIS:} {A} fusion framework of refinement and disambiguation for unreliable partial label learning.
\newblock In \emph{Proceedings of the 40th International Conference on Machine Learning}, pp.\  28321--28336, 2023{\natexlab{b}}.

\bibitem[Raschka(2018)]{raschka2018model}
Sebastian Raschka.
\newblock Model evaluation, model selection, and algorithm selection in machine learning.
\newblock \emph{arXiv preprint arXiv:1811.12808}, 2018.

\bibitem[Ren et~al.(2016{\natexlab{a}})Ren, He, Qu, Huang, Ji, and Han]{ren2016afet}
Xiang Ren, Wenqi He, Meng Qu, Lifu Huang, Heng Ji, and Jiawei Han.
\newblock {AFET}: Automatic fine-grained entity typing by hierarchical partial-label embedding.
\newblock In \emph{Proceedings of the 2016 Conference on Empirical Methods in Natural Language Processing}, pp.\  1369--1378, 2016{\natexlab{a}}.

\bibitem[Ren et~al.(2016{\natexlab{b}})Ren, He, Qu, Voss, Ji, and Han]{ren2016label}
Xiang Ren, Wenqi He, Meng Qu, Clare~R. Voss, Heng Ji, and Jiawei Han.
\newblock Label noise reduction in entity typing by heterogeneous partial-label embedding.
\newblock In \emph{Proceedings of the 22nd ACM SIGKDD International Conference on Knowledge Discovery \& Data Mining}, pp.\  1825--1834, 2016{\natexlab{b}}.

\bibitem[Schmarje et~al.(2022)Schmarje, Grossmann, Zelenka, Dippel, Kiko, Oszust, Pastell, Stracke, Valros, Volkmann, and Koch]{schmarje2022is}
Lars Schmarje, Vasco Grossmann, Claudius Zelenka, Sabine Dippel, Rainer Kiko, Mariusz Oszust, Matti Pastell, Jenny Stracke, Anna Valros, Nina Volkmann, and Reinhard Koch.
\newblock Is one annotation enough? - a data-centric image classification benchmark for noisy and ambiguous label estimation.
\newblock In \emph{Advances in Neural Information Processing Systems 35}, pp.\  33215--33232, 2022.

\bibitem[Sugiyama et~al.(2022)Sugiyama, Bao, Ishida, Lu, Sakai, and Niu]{sugiyama2022machine}
Masashi Sugiyama, Han Bao, Takashi Ishida, Nan Lu, Tomoya Sakai, and Gang Niu.
\newblock \emph{Machine learning from weak supervision: An empirical risk minimization approach}.
\newblock MIT Press, 2022.

\bibitem[Tang et~al.(2023)Tang, Zhang, and Zhang]{tang2023disambiguated}
Wei Tang, Weijia Zhang, and Min-Ling Zhang.
\newblock Disambiguated attention embedding for multi-instance partial-label learning.
\newblock In \emph{Advances in Neural Information Processing Systems 36}, pp.\  56756--56771, 2023.

\bibitem[Tian et~al.(2024)Tian, Wei, Wang, and Feng]{tian2024crosel}
Shiyu Tian, Hongxin Wei, Yiqun Wang, and Lei Feng.
\newblock {CroSel}: Cross selection of confident pseudo labels for partial-label learning.
\newblock In \emph{Proceedings of the 2024 IEEE/CVF Conference on Computer Vision and Pattern Recognition}, pp.\  19479--19488, 2024.

\bibitem[Tian et~al.(2023)Tian, Yu, and Fu]{tian2023partial}
Yingjie Tian, Xiaotong Yu, and Saiji Fu.
\newblock Partial label learning: Taxonomy, analysis and outlook.
\newblock \emph{Neural Networks}, 161:\penalty0 708--734, 2023.

\bibitem[Wang et~al.(2022{\natexlab{a}})Wang, Zhang, and Li]{wang2022adaptive}
Deng-Bao Wang, Min-Ling Zhang, and Li~Li.
\newblock Adaptive graph guided disambiguation for partial label learning.
\newblock \emph{IEEE Transactions on Pattern Analysis and Machine Intelligence}, 44\penalty0 (12):\penalty0 8796--8811, 2022{\natexlab{a}}.

\bibitem[Wang et~al.(2022{\natexlab{b}})Wang, Xiao, Li, Feng, Niu, Chen, and Zhao]{wang2022pico}
Haobo Wang, Ruixuan Xiao, Sharon Li, Lei Feng, Gang Niu, Gang Chen, and Junbo Zhao.
\newblock {PiCO}: Contrastive label disambiguation for partial label learning.
\newblock In \emph{Proceedings of the 10th International Conference on Learning Representations}, 2022{\natexlab{b}}.

\bibitem[Wang et~al.(2024{\natexlab{a}})Wang, Xiao, Li, Feng, Niu, Chen, and Zhao]{wang2024picoplus}
Haobo Wang, Ruixuan Xiao, Yixuan Li, Lei Feng, Gang Niu, Gang Chen, and Junbo Zhao.
\newblock {PiCO+}: Contrastive label disambiguation for robust partial label learning.
\newblock \emph{IEEE Transactions on Pattern Analysis and Machine Intelligence}, 46\penalty0 (5):\penalty0 3183--3198, 2024{\natexlab{a}}.

\bibitem[Wang et~al.(2024{\natexlab{b}})Wang, Ha, Ye, Lin, and Lin]{wang2024climage}
Hsiu-Hsuan Wang, Mai~Tan Ha, Nai-Xuan Ye, Wei-I Lin, and Hsuan-Tien Lin.
\newblock {CLI}mage: Human-annotated datasets for complementary-label learning, 2024{\natexlab{b}}.
\newblock URL \url{https://openreview.net/forum?id=36ehx1GHD0}.

\bibitem[Wang et~al.(2019)Wang, Li, and Zhou]{wang2019partial}
Qian-Wei Wang, Yu-Feng Li, and Zhi-Hua Zhou.
\newblock Partial label learning with unlabeled data.
\newblock In \emph{Proceedings of the 28th International Joint Conference on Artificial Intelligence}, pp.\  3755--3761, 2019.

\bibitem[Wang \& Zhang(2020)Wang and Zhang]{wang2020semi}
Wei Wang and Min-Ling Zhang.
\newblock Semi-supervised partial label learning via confidence-rated margin maximization.
\newblock In \emph{Advances in Neural Information Processing Systems 33}, pp.\  6982--6993, 2020.

\bibitem[Wang \& Zhang(2022)Wang and Zhang]{wang2022partial}
Wei Wang and Min-Ling Zhang.
\newblock Partial label learning with discrimination augmentation.
\newblock In \emph{Proceedings of the 28th ACM SIGKDD Conference on Knowledge Discovery and Data Mining}, pp.\  1920--1928, 2022.

\bibitem[Wang et~al.(2024{\natexlab{c}})Wang, Ishida, Zhang, Niu, and Sugiyama]{wang2024learning}
Wei Wang, Takashi Ishida, Yu-Jie Zhang, Gang Niu, and Masashi Sugiyama.
\newblock Learning with complementary labels revisited: {T}he selected-completely-at-random setting is more practical.
\newblock In \emph{Proceedings of the 41st International Conference on Machine Learning}, pp.\  50683--50710, 2024{\natexlab{c}}.

\bibitem[Wei et~al.(2022)Wei, Zhu, Cheng, Liu, Niu, and Liu]{wei2022learning}
Jiaheng Wei, Zhaowei Zhu, Hao Cheng, Tongliang Liu, Gang Niu, and Yang Liu.
\newblock Learning with noisy labels revisited: {A} study using real-world human annotations.
\newblock In \emph{Proceedings of the 10th International Conference on Learning Representations}, 2022.

\bibitem[Wen et~al.(2021)Wen, Cui, Hang, Liu, Wang, and Lin]{wen2021leveraged}
Hongwei Wen, Jingyi Cui, Hanyuan Hang, Jiabin Liu, Yisen Wang, and Zhouchen Lin.
\newblock Leveraged weighted loss for partial label learning.
\newblock In \emph{Proceedings of the 38th International Conference on Machine Learning}, pp.\  11091--11100, 2021.

\bibitem[Wu et~al.(2022)Wu, Wang, and Zhang]{wu2022revisiting}
Dong-Dong Wu, Deng-Bao Wang, and Min-Ling Zhang.
\newblock Revisiting consistency regularization for deep partial label learning.
\newblock In \emph{Proceedings of the 39th International Conference on Machine Learning}, pp.\  24212--24225, 2022.

\bibitem[Wu et~al.(2024)Wu, Wang, and Zhang]{wu2024distilling}
Dong-Dong Wu, Deng-Bao Wang, and Min-Ling Zhang.
\newblock Distilling reliable knowledge for instance-dependent partial label learning.
\newblock \emph{Proceedings of the AAAI Conference on Artificial Intelligence}, pp.\  15888--15896, 2024.

\bibitem[Wu et~al.(2023)Wu, Lv, and Sugiyama]{wu2023learning}
Zhenguo Wu, Jiaqi Lv, and Masashi Sugiyama.
\newblock Learning with proper partial labels.
\newblock \emph{Neural Computation}, 35\penalty0 (1):\penalty0 58--81, 2023.

\bibitem[Xia et~al.(2022)Xia, Lv, Xu, and Geng]{xia2022ambiguity}
Shiyu Xia, Jiaqi Lv, Ning Xu, and Xin Geng.
\newblock Ambiguity-induced contrastive learning for instance-dependent partial label learning.
\newblock In \emph{Proceedings of the 31st International Joint Conference on Artificial Intelligence}, pp.\  3615--3621, 2022.

\bibitem[Xu et~al.(2023{\natexlab{a}})Xu, Lian, Feng, Liu, and Tao]{xu2023alim}
Mingyu Xu, Zheng Lian, Lei Feng, Bin Liu, and Jianhua Tao.
\newblock {ALIM:} adjusting label importance mechanism for noisy partial label learning.
\newblock In \emph{Advances in Neural Information Processing Systems 36}, pp.\  38668--38684, 2023{\natexlab{a}}.

\bibitem[Xu et~al.(2021)Xu, Qiao, Geng, and Zhang]{xu2021instance}
Ning Xu, Congyu Qiao, Xin Geng, and Min-Ling Zhang.
\newblock Instance-dependent partial label learning.
\newblock In \emph{Advances in Neural Information Processing Systems 34}, pp.\  27119--27130, 2021.

\bibitem[Xu et~al.(2023{\natexlab{b}})Xu, Liu, Lv, Qiao, and Geng]{xu2023progressive}
Ning Xu, Biao Liu, Jiaqi Lv, Congyu Qiao, and Xin Geng.
\newblock Progressive purification for instance-dependent partial label learning.
\newblock In \emph{Proceedings of the 40th International Conference on Machine Learning}, pp.\  38551--38565, 2023{\natexlab{b}}.

\bibitem[Yao et~al.(2020)Yao, Deng, Chen, Gong, Wu, and Yang]{yao2020deep}
Yao Yao, Jiehui Deng, Xiuhua Chen, Chen Gong, Jianxin Wu, and Jian Yang.
\newblock Deep discriminative {CNN} with temporal ensembling for ambiguously-labeled image classification.
\newblock In \emph{Proceedings of the 34th {AAAI} Conference on Artificial Intelligence}, pp.\  12669--12676, 2020.

\bibitem[Yu et~al.(2023)Yu, Ma, and Liu]{yu2023delving}
Chenglin Yu, Xinsong Ma, and Weiwei Liu.
\newblock Delving into noisy label detection with clean data.
\newblock In \emph{Proceedings of the 40th International Conference on Machine Learning}, pp.\  40290--40305, 2023.

\bibitem[Yu et~al.(2018)Yu, Liu, Gong, and Tao]{yu2018learning}
Xiyu Yu, Tongliang Liu, Mingming Gong, and Dacheng Tao.
\newblock Learning with biased complementary labels.
\newblock In \emph{Proceedings of the 15th European Conference on Computer Vision}, pp.\  68--83, 2018.

\bibitem[Zeng et~al.(2013)Zeng, Xiao, Jia, Chan, Gao, Xu, and Ma]{zeng2013learning}
Zinan Zeng, Shijie Xiao, Kui Jia, Tsung-Han Chan, Shenghua Gao, Dong Xu, and Yi~Ma.
\newblock Learning by associating ambiguously labeled images.
\newblock In \emph{Proceedings of the IEEE Conference on Computer Vision and Pattern Recognition}, pp.\  708--715, 2013.

\bibitem[Zhang et~al.(2022)Zhang, Feng, Han, Liu, Niu, Qin, and Sugiyama]{zhang20222exploiting}
Fei Zhang, Lei Feng, Bo~Han, Tongliang Liu, Gang Niu, Tao Qin, and Masashi Sugiyama.
\newblock Exploiting class activation value for partial-label learning.
\newblock In \emph{Proceedings of the 10th International Conference on Learning Representations}, 2022.

\bibitem[Zhang et~al.(2017)Zhang, Yu, and Tang]{zhang2017disambiguation}
Min-Ling Zhang, Fei Yu, and Cai-Zhi Tang.
\newblock Disambiguation-free partial label learning.
\newblock \emph{IEEE Transactions on Knowledge and Data Engineering}, 29\penalty0 (10):\penalty0 2155--2167, 2017.

\bibitem[Zhou et~al.(2018)Zhou, Zhang, Zhang, and He]{zhou2018weakly}
Deyu Zhou, Zhikai Zhang, Min-Ling Zhang, and Yulan He.
\newblock Weakly supervised {POS} tagging without disambiguation.
\newblock \emph{ACM Transactions on Asian and Low-Resource Language Information Processing}, 17\penalty0 (4):\penalty0 1--19, 2018.

\end{thebibliography}
\newpage
\appendix
\section{Proofs}\label{apd:proofs}
\subsection{Proof of Proposition~\ref{cr_bound}}
\begin{proof}[\unskip \nopunct]
\begin{align}
&\mathbb{E}\left[{\rm CR}(\bm{f})\right]-{\rm ACC}\left(\bm{f}\right) \nonumber \\
=&\mathbb{E}_{p(\bm{x},S)}\left[\mathbb{I}\left(\mathop{\arg\max}_{j}~f_{j}(\bm{x})\in S\right)\right]-\mathbb{E}_{p(\bm{x},y)}\left[\mathbb{I}\left(\mathop{\arg\max}_{j}~f_{j}(\bm{x})=y\right)\right] \nonumber \\
=&\mathbb{E}_{p(\bm{x},y,S)}\left[\mathbb{I}\left(\mathop{\arg\max}_{j}~f_{j}(\bm{x})\in {S\backslash \{y\}}\right)\right] \nonumber  \\
=&\mathbb{E}_{p(\bm{x},y,S)}\left[\mathbb{I}\left(\mathop{\arg\max}_{j}~f_{j}(\bm{x})\in {S\backslash \{y\}}\right)\mathbb{I}\left(\mathop{\arg\max}_{j}~f_{j}(\bm{x})\neq y\right)\right] \nonumber \\
=&p\left(\mathop{\arg\max}_{j}~f_{j}(\bm{x})\in {S\backslash \{y\}},~\mathop{\arg\max}_{j}~f_{j}(\bm{x})\neq y\right)\nonumber \\
=&p\left(\mathop{\arg\max}_{j}~f_{j}(\bm{x})\neq y\right)p\left(\mathop{\arg\max}_{j}~f_{j}(\bm{x})\in {S\backslash \{y\}}|\mathop{\arg\max}_{j}~f_{j}(\bm{x})\neq y\right) \nonumber \\
=&\left(1-{\rm ACC}\left(\bm{f}\right)\right)p\left(\mathop{\arg\max}_{j}~f_{j}(\bm{x})\in {S\backslash \{y\}}|\mathop{\arg\max}_{j}~f_{j}(\bm{x})\neq y\right) \nonumber \\
\leq&(1-\epsilon)\mathop{\max}_{\bar{y}\neq y}~p(\bar{y}\in S|\bm{x},y) \nonumber \\
\leq&(1-\epsilon)\gamma. \nonumber
\end{align}
Here, the third equation can be obtained by traversing the cases of $\mathop{\arg\max}_{j}~f_{j}(\bm{x})=y$, $\mathop{\arg\max}_{j}~f_{j}(\bm{x})\in {S\backslash \{y\}}$, and $\mathop{\arg\max}_{j}~f_{j}(\bm{x})\notin S$. The last inequality results from the definition of the ambiguity degree. The proof is completed.
\end{proof}
\subsection{Proof of Theorem~\ref{cr_consistency}}
\begin{proof}[\unskip \nopunct]
Under the USS assumption or the FPS assumption with a constant flipping probability, the probability $p(\bar{y}\in S|\bm{x},\bar{y}\neq y)$ is a constant value for different $\bar{y}$. Suppose the constant is $c$. Then, we have 
\begin{align}
&\mathbb{E}\left[{\rm CR}(\bm{f})\right]-{\rm ACC}\left(\bm{f}\right) \nonumber \\
=&\left(1-{\rm ACC}\left(\bm{f}\right)\right)p\left(\mathop{\arg\max}_{j}~f_{j}(\bm{x})\in {S\backslash \{y\}}|\mathop{\arg\max}_{j}~f_{j}(\bm{x})\neq y\right) \nonumber \\
=&c\left(1-{\rm ACC}\left(\bm{f}\right)\right). \nonumber
\end{align}
Therefore, we have
\begin{equation}
{\rm ACC}\left(\bm{f}\right)=\frac{\mathbb{E}\left[{\rm CR}(\bm{f})\right]-c}{1-c}.
\end{equation}
Since we have $c<1$, for any two classifiers $\bm{f}_1$ and $\bm{f}_2$ that satisfy $\mathbb{E}\left[{\rm CR}(\bm{f}_1)\right] < \mathbb{E}\left[{\rm CR}(\bm{f}_2)\right]$, we have ${\rm ACC}\left(\bm{f}_1\right) < {\rm ACC}\left(\bm{f}_2\right)$. The proof is completed.
\end{proof}
\subsection{Proof of Theorem~\ref{ure_thm}}
The proof of Theorem~\ref{ure_thm} is mainly based on the theoretical results from~\citet{wu2023learning}. We introduce the following lemma. 
\begin{lemma}[\citet{wu2023learning}]\label{ure_lemma}
Assume that there exist a function $C: \mathcal{X}\bigtimes2^{\mathcal{Y}}\mapsto\mathbb{R}$ such that the condition $p(S|\bm{x}, y)=C(\bm{x}, S)\mathbb{I}\left(y\in S\right)$ holds for partial-label data. Then, the classification risk 
\begin{equation}\label{ordinary_risk}
\mathbb{E}_{p(\bm{x},y)}\left[\mathcal{L}(\bm{f}(\bm{x}), y)\right]
\end{equation}
is equivalent to 
\begin{equation}
\mathbb{E}_{p(\bm{x},y)}\left[\sum_{j\in S}\frac{p(y=j|\bm{x})}{\sum_{k\in S}p(y=k|\bm{x})}\mathcal{L}(\bm{f}(\bm{x}), k)\right].
\end{equation}
\end{lemma}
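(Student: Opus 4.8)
The plan is to start from the weighted expression on the right-hand side and collapse it back to the ordinary classification risk~\eqref{ordinary_risk}, using the generation assumption twice: once to rewrite the candidate-set marginal and once to exploit that $p(\cdot\,|\bm{x},y)$ is a normalized distribution over candidate sets. Since the summand depends on $S$, I read the outer expectation in the target identity as being over the partial-label distribution $p(\bm{x},S)$, and write it out as $\int p(\bm{x})\sum_{S}p(S|\bm{x})\sum_{k\in S}\frac{p(y=k|\bm{x})}{\sum_{j\in S}p(y=j|\bm{x})}\mathcal{L}(\bm{f}(\bm{x}),k)\,\mathrm{d}\bm{x}$.

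First I would compute the conditional marginal $p(S|\bm{x})$ under the stated assumption. Marginalizing over the true label gives $p(S|\bm{x})=\sum_{y}p(S|\bm{x},y)\,p(y|\bm{x})=C(\bm{x},S)\sum_{j\in S}p(y=j|\bm{x})$, where the indicator $\mathbb{I}(y\in S)$ restricts the sum to labels lying inside $S$. Substituting this back, the factor $\sum_{j\in S}p(y=j|\bm{x})$ cancels exactly against the denominator of the weights, so the right-hand side simplifies to $\int p(\bm{x})\sum_{S}C(\bm{x},S)\sum_{k\in S}p(y=k|\bm{x})\,\mathcal{L}(\bm{f}(\bm{x}),k)\,\mathrm{d}\bm{x}$. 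A technical caveat worth noting is that this cancellation requires $\sum_{j\in S}p(y=j|\bm{x})>0$ whenever $p(S|\bm{x})>0$, which holds automatically because the same marginal formula forces any positive-probability candidate set to have a positive denominator.

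The crucial step is then to interchange the order of summation, $\sum_{S}\sum_{k\in S}(\cdot)=\sum_{k}\sum_{S:\,k\in S}(\cdot)$, which isolates the quantity $\sum_{S:\,k\in S}C(\bm{x},S)$. I expect this normalization identity to be the main obstacle, as it is the only place where the probabilistic structure of the generation process is genuinely used: because $p(\cdot\,|\bm{x},k)$ sums to one over all candidate sets, summing the assumption $p(S|\bm{x},k)=C(\bm{x},S)\mathbb{I}(k\in S)$ over $S$ yields $1=\sum_{S}C(\bm{x},S)\mathbb{I}(k\in S)=\sum_{S:\,k\in S}C(\bm{x},S)$ for every label $k$. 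Plugging in this value collapses the outer sum over $S$, leaving $\int p(\bm{x})\sum_{k}p(y=k|\bm{x})\,\mathcal{L}(\bm{f}(\bm{x}),k)\,\mathrm{d}\bm{x}$, which is precisely $\mathbb{E}_{p(\bm{x},y)}[\mathcal{L}(\bm{f}(\bm{x}),y)]$ and thus equals the ordinary risk~\eqref{ordinary_risk}, completing the argument.
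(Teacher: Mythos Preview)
Your argument is correct. Note, however, that the paper does not itself prove this lemma: it is quoted verbatim from Wu et al.~(2023) and then invoked as a black box to establish the consistency of the Approximated Accuracy metric. Your derivation therefore supplies what the paper omits. You rightly read the outer expectation as being over $p(\bm{x},S)$ (the printed statement carries typos both in the expectation subscript and in the loss index, which should be $\mathcal{L}(\bm{f}(\bm{x}),j)$), compute $p(S\mid\bm{x})=C(\bm{x},S)\sum_{j\in S}p(y=j\mid\bm{x})$ to cancel the weighting denominator, interchange the order of summation, and close with the normalization identity $\sum_{S:\,k\in S}C(\bm{x},S)=1$ obtained by summing the generation assumption over $S$. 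This is the standard route for such unbiased-risk-estimator identities, so there is no substantive divergence to compare; your proof is self-contained where the paper relies on citation.
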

Then, we provide the proof of Theorem~\ref{ure_thm}.
\begin{proof}[Proof of Theorem~\ref{ure_thm}]
We set $\mathcal{L}(\bm{f}(\bm{x}), k)=\mathbb{I}\left(\mathop{\arg\max}_{j}~f_{j}(\bm{x})= y\right)$. Then, if the multi-class classifier $\bm{f}(\bm{x})$ is consistent with $p(y|\bm{x})$, $\mathrm{AA}(\bm{f})$ is statistically consistent with the expected accuracy. The proof is completed.
\end{proof}
\section{More Details of Benchmark Datasets}\label{apd:datasets}
In this section, we first describe the data sheet of PLCIFAR10, a novel dataset collected by us. Then, we provide more information of real-world datasets used in the paper. The summary of all the datasets is shown in Table~\ref{real_world_dataset}.
\begin{table*}[ht]
\caption{Characteristics of real-world PLL datasets used in \textsc{Plench}.}\label{real_world_dataset}\vspace{2pt}
\centering
\setlength{\tabcolsep}{3pt}
\resizebox{\textwidth}{!}{
\begin{tabular}{cccccccc}
\toprule	 	
\textbf{Dataset} & \textbf{\# Examples} & \textbf{\# Features} & \textbf{\# Classes} & \textbf{Avg. \# CLs} & \textbf{Noise Rate} &\textbf{Type} &\textbf{Task Domain} \\\midrule
Lost  & 1,122  & 108   & 16    & 2.23 & 0\%& tabular&automatic face naming~\citep{cour2011learning} \\
MSRCv2 & 1,758 & 48 & 23 & 3.16 & 0\%&tabular&object classification~\citep{liu2012conditional}  \\
Mirflickr & 2,780 & 1,536 & 14 & 2.76 & 0\%&tabular&web image classification~\citep{huiskes2008mir}  \\
Birdsong & 4,998 & 38 & 13 & 2.18 & 0\%& tabular&bird song classification~\citep{briggs2012rank} \\ 
Malagasy & 5,303 & 384 & 44 & 8.35 &0.04\%& tabular&POS Tagging~\citep{garrette2013learning} \\ 
Soccer Player & 17,472 & 279   & 171   & 2.09 &0\%& tabular&automatic face naming~\citep{zeng2013learning} \\
Italian & 21,878 & 519   & 90   & 1.60 &0\%& tabular&POS Tagging~\citep{johan2009converting} \\
Yahoo! News & 22,991 & 163 & 219 & 1.91 &0\%& tabular&automatic face naming~\citep{guillaumin2010multiple} \\
English & 24,000 & 300 & 45 & 1.19 & 0.97\%& tabular & POS Tagging~\citep{zhou2018weakly} \\
PLCIFAR10-Aggregate (Ours) & 50,000 & 3,072   & 10   & 4.87 &0.13\%& image&image classification~\citep{krizhevsky2009learning} \\
PLCIFAR10-Vaguest (Ours) & 50,000 & 3,072   & 10   & 3.49 &17.56\%& image&image classification~\citep{krizhevsky2009learning} \\
\bottomrule
\end{tabular}
}
\end{table*}
\subsection{More Details of PLCIFAR10}
For each example, we have a list of lists, where each sublist contains partial labels given by a single annotator. Each image was resized to $256\times256$ for easy annotation. We also imposed several requirements on the annotation task to ensure the quality of the annotations. First, we asked the annotators to choose labels for all ten images. Second, we did not allow the same annotator to select the same label for more than a threshold number of images. Third, we did not allow annotators to select too many partial labels more than a threshold for a single image since the annotation may contain little supervision information. We set the thresholds to 6 for a HIT of 10 images. Crowdworkers were involved in the data collection process. We paid \$0.02 for a HIT, where \$0.01 was given to the crowdworkers and \$0.01 was given to the MTurk platform.

\subsection{More Information of Tabular Datasets}
For face age estimation, we considered ten crowdsourced labels of age numbers along with the true label as partial labels for a given human face. For automatic face naming, we considered the names in the corresponding captions or subtitles as partial labels for a face cropped from an image. For object classification, we considered object classes as partial labels for a segmentation part in an image. For web image classification, we considered tags on a web page as partial labels for a given image. For bird song classification, we considered bird species appearing in a ten-second bird song fragment as partial labels for the singing syllables of the fragment. For POS tagging, we considered all possible POS tags as partial labels for a given word with its contexts.
\begin{table*}[t]
\caption{Summary of benchmark algorithms.}\label{algorithms_table}\vspace{2pt}
\centering
\setlength{\tabcolsep}{3pt}
\resizebox{\textwidth}{!}{
\begin{tabular}{ll}
\toprule	 	
\multirow{3}{*}{\textbf{Vanilla Deep PLL Algorithms}}&PRODEN~\citep{lv2020progressive}, CAVL~\citep{zhang20222exploiting}, POP~\citep{xu2023progressive}, ABS-MAE~\citep{lv2024on}, \\ & ABS-GCE~\citep{lv2024on}, EXP~\citep{feng2020learning}, MCL-GCE~\citep{feng2020learning},  \\ &MCL-MSE~\citep{feng2020learning}, CC~\citep{feng2020provably}, LWS~\citep{wen2021leveraged}, IDGP~\citep{qiao2023decompositional} \\\midrule
\multirow{2}{*}{\textbf{Vanilla Deep CLL Algorithms}}&PC~\citep{ishida2017learning}, Forward~\citep{yu2018learning}, NN~\citep{ishida2019complementary}, GA~\citep{ishida2019complementary},  \\ & SCL-EXP~\citep{chou2020unbiased}, SCL-NL~\citep{chou2020unbiased}, L-W~\citep{gao2021discriminative}, OP-W~\citep{liu2023consistent} \\\midrule
\multirow{2}{*}{\textbf{Holistic Deep PLL Algorithms}}& VALEN~\citep{xu2021instance}, PiCO~\citep{wang2022pico}, ABLE~\citep{xia2022ambiguity}, CRDPLL~\citep{wu2022revisiting}, \\
&DIRK~\citep{wu2024distilling} \\\midrule
\textbf{Deep Noisy PLL Algorithms} & FREDIS~\citep{qiao2023fredis}, ALIM~\citep{xu2023alim}, PiCO+~\citep{wang2024picoplus} \\
\bottomrule
\end{tabular}}
\end{table*}
\section{Usage of \textsc{Plench}}
PLCIFAR10 is located in \texttt{plench/data/plcifar10} in the code package. Tabular datasets can be downloaded from \url{https://palm.seu.edu.cn/zhangml/Resources.htm#data}. Researchers can easily add newly developed algorithms to \textsc{Plench} to verify their effectiveness. We mainly need to inherit the \texttt{Algorithm} class and implement the \texttt{update} and \texttt{predict} functions for the new algorithm in \texttt{algorithms.py}. For example, to implement CAVL~\citep{zhang20222exploiting}, we can write the following code:
\lstset{
    language=Python,
    basicstyle=\ttfamily,
    keywordstyle=\color{blue}\ttfamily,
    stringstyle=\color{red}\ttfamily,
    commentstyle=\color{green}\ttfamily,
    morecomment=[l][\color{magenta}]{\#},
}
\newsavebox{\lsta}
\begin{lrbox}{\lsta}
\begin{lstlisting}
class CAVL(Algorithm):
    def __init__(self, input_shape, train_givenY, hparams):
        super(CAVL, self).__init__(input_shape, train_givenY, hparams)
        self.featurizer = networks.Featurizer(input_shape, self.hparams)
        self.classifier = networks.Classifier(
            self.featurizer.n_outputs,
            self.num_classes)

        self.network = nn.Sequential(self.featurizer, self.classifier)
        self.optimizer = torch.optim.Adam(
            self.network.parameters(),
            lr=self.hparams["lr"],
            weight_decay=self.hparams['weight_decay']
        )
        train_givenY = torch.from_numpy(train_givenY)
        tempY = train_givenY.sum(dim=1).unsqueeze(1).repeat(1, train_givenY.shape[1])
        label_confidence = train_givenY.float()/tempY
        self.label_confidence = label_confidence
        self.label_confidence = self.label_confidence.double()

    def update(self, minibatches):
        _, x, strong_x, partial_y, _, index = minibatches
        loss = self.rc_loss(self.predict(x), index)
        self.optimizer.zero_grad()
        loss.backward()
        self.optimizer.step()
        self.confidence_update(x, partial_y, index)
        return {'loss': loss.item()}

    def rc_loss(self, outputs, index):
        device = "cuda" if index.is_cuda else "cpu"
        self.label_confidence = self.label_confidence.to(device)
        logsm_outputs = F.log_softmax(outputs, dim=1)
        final_outputs = logsm_outputs * self.label_confidence[index, :]
        average_loss = - ((final_outputs).sum(dim=1)).mean()
        return average_loss

    def predict(self, x):
        return self.network(x)

    def confidence_update(self, batchX, batchY, batch_index):
        with torch.no_grad():
            batch_outputs = self.predict(batchX)
            cav = (batch_outputs*torch.abs(1-batch_outputs))*batchY
            cav_pred = torch.max(cav,dim=1)[1]
            gt_label = F.one_hot(cav_pred,batchY.shape[1])
            self.label_confidence[batch_index,:] = gt_label.double()
\end{lstlisting}
\end{lrbox}
\scalebox{0.75}{\usebox{\lsta}}

After implementing the algorithmic code, we can specify the necessary hyperparameters for the algorithm in \texttt{hparams\_registry.py}. Then we can train the model using the new algorithm with the following script:
\newsavebox{\lstd}
\begin{lrbox}{\lstd}
\begin{lstlisting}
python -m \textsc{Plench}.train --data_dir=your_data_path --algorithm CAVL \
--dataset PLCIFAR10_Aggregate --output_dir=your_output_path --steps 60000
\end{lstlisting}
\end{lrbox}

\scalebox{0.75}{\usebox{\lstd}}
\section{Related Work}
There are three main categories of deep PLL algorithms, including identification-based strategies, averaging-based strategies, and data-generation-based strategies. Identification-based strategies try to find the true label from the candidate label set and train a classifier simultaneously~\citep{yao2020deep,lv2020progressive,wang2020semi,zhang20222exploiting,wang2022partial,li2023learning,xu2023progressive,gong2022partial,he2024candidate,tian2024crosel,lv2024what}. Averaging-based strategies consider equal contributions from all candidate labels and use the average modeling output as the final output~\citep{lv2024on}. Data-generation-based strategies model the generative relationship between partial labels and true labels, and derive loss functions with theoretical guarantees~\citep{feng2020provably,feng2020learning,wen2021leveraged,qiao2023decompositional}. In addition, many works use strong representation learning techniques to improve model performance based on these basic strategies, such as contrastive learning and consistency regularization~\citep{wang2022pico,wu2022revisiting,xia2022ambiguity,wu2024distilling}. There are some work that also handles data annotations with multiple labels~\citep{khetan2018learning,peterson2019human,collins2022eliciting,schmarje2022is,gao2022learning,goswami2023aqua} .
\section{Details of deep PLL Algorithms}\label{apd:algos}
In this section, we first give detailed descriptions of the PLL algorithms used in this paper. Then, we provide the hyperparameter configurations for all the algorithms. Table~\ref{algorithms_table} shows the summary of the deep PLL algorithms included in this paper.
\subsection{Descriptions of Algorithms}
The vanilla deep PLL algorithms include:
\begin{itemize}[leftmargin=1em, itemsep=1pt, topsep=0pt, parsep=-1pt]
\item PRODEN~\citep{lv2020progressive}: An identification-based strategy that uses self-training to progressively estimate the true label distribution from the candidate label set.
\item CAVL~\citep{zhang20222exploiting}: An identification-based strategy that uses the class activation value to directly identify the true label from the candidate label set.
\item POP~\citep{xu2023progressive}: An identification-based strategy that progressively filters out false positive labels from the candidate label set based on PRODEN.
\item ABS-MAE~\citep{lv2024on}: An averaging-based strategy using the mean absolute error (MAE) loss function.
\item ABS-GCE~\citep{lv2024on}: An averaging-based strategy using the generalized cross-entropy (GCE) loss function.
\item CC~\citep{feng2020provably}: A data-generation-based strategy using a classifier-consistent loss function based on the uniform distribution assumption.
\item EXP~\citep{feng2020learning}: A data-generation-based strategy using exponential loss under the uniform distribution assumption.
\item MCL-GCE~\citep{feng2020learning}: A data-generation-based strategy using generalized cross-entropy (GCE) loss under the uniform distribution assumption.
\item MCL-MSE~\citep{feng2020learning}: A data-generation-based strategy using mean squared error (MSE) loss based on the uniform distribution assumption.
\item LWS~\citep{wen2021leveraged}: A data-generation-based strategy that uses a leveraged weighted loss function to account for losses from both candidate and non-candidate labels.
\item IDGP~\citep{qiao2023decompositional}: A data-generation-based strategy that performs Maximum A Posterior~(MAP) using a decomposed probability distribution model.
\end{itemize}

The vanilla deep CLL algorithms include
\begin{itemize}[leftmargin=1em, itemsep=1pt, topsep=0pt, parsep=-1pt]
\item PC~\citep{ishida2017learning}: A risk-consistent CLL algorithm using the pairwise-comparison loss based on the uniform distribution assumption.
\item Forward~\citep{yu2018learning}: A classifier-consistent CLL algorithm using a transition matrix to model the complementary-label generation process based on the biased distribution assumption.
\item NN~\citep{ishida2019complementary}: A risk-consistent CLL algorithm using a non-negative risk estimator based on the uniform distribution assumption.
\item GA~\citep{ishida2019complementary}: A risk-consistent CLL algorithm using the gradient ascent technique based on the uniform distribution assumption.
\item SCL-EXP~\citep{chou2020unbiased}: A discriminative CLL algorithm using exponential loss.
\item SCL-NL~\citep{chou2020unbiased}: A discriminative CLL algorithm using negative loss.
\item L-W~\citep{gao2021discriminative}: A discriminative CLL algorithm using weighted loss.
\item OP-W~\citep{liu2023consistent}: A classifier-consistent CLL algorithm by using the opposite number of logits to compute the loss.
\end{itemize}
The holistic deep PLL algorithms:
\begin{itemize}[leftmargin=1em, itemsep=1pt, topsep=0pt, parsep=-1pt]
\item VALEN~\citep{xu2021instance}: An identification-based strategy that uses variational inference to estimate the true label distribution.
\item PiCO~\citep{wang2022pico}: A PLL algorithm that uses the supervised contrastive learning module to improve model performance.
\item ABLE~\citep{xia2022ambiguity}: A PLL algorithm that uses an ambiguity-induced contrastive learning module to improve model performance.
\item CRDPLL~\citep{wu2022revisiting}: A PLL algorithm that uses consistency regularization to improve model performance.
\item DIRK~\citep{wu2024distilling}: A PLL algorithm that uses knowledge distillation and contrastive learning to improve model performance.
\end{itemize}
The deep noisy PLL algorithms include:
\begin{itemize}[leftmargin=1em, itemsep=1pt, topsep=0pt, parsep=-1pt]
\item FREDIS~\citep{qiao2023fredis}: A noisy PLL algorithm that filters out false positive labels while including false negative labels.
\item ALIM~\citep{xu2023alim}: A noisy PLL algorithm that uses a weighted sum of the labeling confidence of candidate and non-candidate label sets as the target. We assume that the true noise rate is accessible.
\item PiCO+~\citep{wang2024picoplus}: A noisy PLL algorithm using distance-based clean sample selection semi-supervised contrastive learning. We assume that the true noise rate is accessible.
\end{itemize}
\subsection{Implementation Details}
All the algorithms were implemented in PyTorch~\citep{paszke2019pytorch} and all experiments were conducted with a single NVIDIA Tesla V100 GPU. We used the Adam optimizer~\citep{kingma2015adam}. We ran 60,000 iterations for the image datasets, 20,000 iterations for the Soccer Player, Italian, Yahoo! News, and English datasets, and 10,000 iterations for the other datasets. We recorded the performance on validation and test sets per 1,000 iterations. We used three random data splits for PLCIFAR10 and five random data splits for tabular datasets. For each data split, we selected 20 random hyperparameter configurations from a given pool. Table~\ref{hyperparameter_table} shows the details of the hyperparameter configurations for all algorithms. 
\begin{table}[t]
\centering
\scriptsize
\caption{Classification accuracy (mean$\pm$std) of each algorithm on Lost with different model selection criteria.}\label{res_lost}
\resizebox{\textwidth}{!}{
}
\end{table}
\end{document}